\documentclass[11pt]{article}
\usepackage{acl}
\usepackage[T1]{fontenc}
\usepackage[utf8]{inputenc}
\usepackage{times}
\usepackage{latexsym}
\usepackage{microtype}
\usepackage{makecell}

\usepackage{amsmath,amssymb,amsthm,mathtools}
\usepackage{booktabs}
\usepackage{graphicx}
\usepackage{pgfplots}
\usepackage{tikz}
\usetikzlibrary{arrows.meta,positioning}
\pgfplotsset{compat=1.17}

\newtheorem{theorem}{Theorem}

\newtheorem{proposition}[theorem]{Proposition}
\newtheorem{definition}{Definition}

\newcommand{\PRM}{\mathrm{PRM}}
\newcommand{\X}{\mathcal{X}}
\newcommand{\D}{\mathcal{D}}

\newcommand{\R}{\mathcal{R}}
\newcommand{\ind}{\mathbf{1}}
\DeclareMathOperator*{\supop}{sup}

\title{Quality-Diversity Stress Tests for Process Reward Models:\\
What Archive Coverage Can and Cannot Certify}
\author{
\textbf{Ibne Farabi Shihab}\textsuperscript{1}%
\thanks{Equal contribution.}%
\thanks{Corresponding author: \texttt{ishihab@iastate.edu}.}
\and
\textbf{Fariya Afrin}\textsuperscript{2}\footnotemark[1]
\\[2pt]
\textsuperscript{1}Department of Computer Science, Iowa State University \\
\textsuperscript{2}Department of Computer Science, Kalinga Institute of Industrial Technology \\
\texttt{ishihab@iastate.edu}
}

\begin{document}
\maketitle

\begin{abstract}
Process reward models (PRMs) score intermediate reasoning steps and are widely used for search, ranking, and training, but optimization can exploit these learned proxies by increasing reward while turning correct reasoning into incorrect reasoning. We formulate PRM stress testing as a quality-diversity search problem using MAP-Elites, retaining the most severe correctness-flipping edit in each behavior-space region while separating search coverage from exploit coverage. We characterize what such archives certify: finite-cell repair bounds covered-cell tail risk and average residual severity but cannot bound the worst remaining cell from covered fraction alone; under Lipschitz post-repair loss and metric-cover auditing, the residual is bounded by archive fitting error plus the Lipschitz constant times the covering radius. A controlled landscape validates this certificate and the impossibility of any fraction-only worst-case guarantee. On real PRMs, the search reveals an aggregation-dependent vulnerability in \texttt{Qwen2.5-Math-PRM-7B}: padding yields $44$ strict exploits with maximum gain $0.294$ under mean pooling versus one exploit under minimum readout; a matched syntactic control isolates the mechanism, and an RLHFlow value-head model shows the same qualitative effect with maximum gain $0.005$. A predeclared paired LoRA repair protocol reduces exploit rates from $0.148$ to $0.037$--$0.074$, lowers the worst attack from $0.333$ to $0.177$--$0.212$, improves ranking AUROC without degrading best-of-$4$ accuracy, attributes gains to adversarial fine-tuning rather than archive diversity, and is confirmed by independent unpaired replications ($44\!\rightarrow\!1$, clean-split worst gain $0.0092$, MATH-500 $41\!\rightarrow\!0$, clean ranking $40/40$).
\end{abstract}

\section{Introduction}
\label{sec:intro}

Process reward models have become an important component of modern reasoning systems. Instead of
assigning a single score to a completed answer, a PRM evaluates intermediate steps, providing a
dense signal that can guide inference-time search, rank candidate derivations, or serve as a
training reward \citep{lightman2024lets,uesato2022solving,wang2024mathshepherd}. The appeal is
straightforward: a step-level signal can distinguish two reasoning traces long before their final
answers diverge. The same granularity, however, creates a larger surface on which a learned
evaluator can be optimized. When a PRM rewards features that correlate with correctness on its
training distribution but are not constitutive of correctness, search or reinforcement learning can
amplify those features without improving the reasoning itself.

This failure is the process-level analogue of reward over-optimization in outcome reward models
\citep{gao2023scaling,skalse2022defining,amodei2016concrete}. Consider a correct trace and an edit
that replaces its conclusion with a plausible wrong answer while adding several confident
``verification'' steps. If the added steps receive high local scores, an average over step rewards
can increase even though the edited trace is wrong. Such an example is more than a naturally
misclassified trace. It is a direction in trace space along which the proxy improves as the target
degrades, and therefore one that an optimizer is incentivized to exploit.

Finding one such direction is useful but incomplete. A single-objective attack tends to return many
variants of the same high-scoring strategy, while a hand-written test suite covers only the failure
modes anticipated by its designer. Quality-diversity (QD) search provides a natural alternative.
Rather than retaining only the globally strongest attack, MAP-Elites
\citep{mouret2015illuminating,pugh2016quality} keeps the strongest candidate in each region of a
behavior space. In our setting, the regions are defined by the type and magnitude of a trace edit.
The resulting archive reveals whether a PRM exhibits a single narrow vulnerability or a repertoire
of qualitatively distinct failure modes.

An archive also invites a tempting robustness claim: if attacks have been found and repaired in a
fraction $\rho$ of the cells, perhaps the worst remaining vulnerability should fall like
$1-\rho$. This intuition is false. An archive may cover almost every cell while missing one cell
whose severity is maximal. Coverage fraction can control an average or a tail probability, but it
cannot by itself control a supremum. This distinction matters because an easily reported coverage
statistic can otherwise be mistaken for a worst-case certificate that it does not justify.

We therefore separate three notions that are often conflated. Visitation coverage records where
the search evaluated at least one candidate. Exploit coverage records where it found a verified
correctness-flipping score increase. Robustness coverage records where a post-repair certificate
supplies a valid cellwise upper bound; a finite heuristic search can only approximate this
condition empirically. These notions support fundamentally different robustness claims. A finite
partition yields a deterministic covered-tail certificate and an average-severity bound. A
worst-case certificate requires additional structure, which we express through the covering radius
of the audited examples in a metric over complete attack instances.

\begin{figure}[t]
    \centering
    \includegraphics[width=\linewidth]{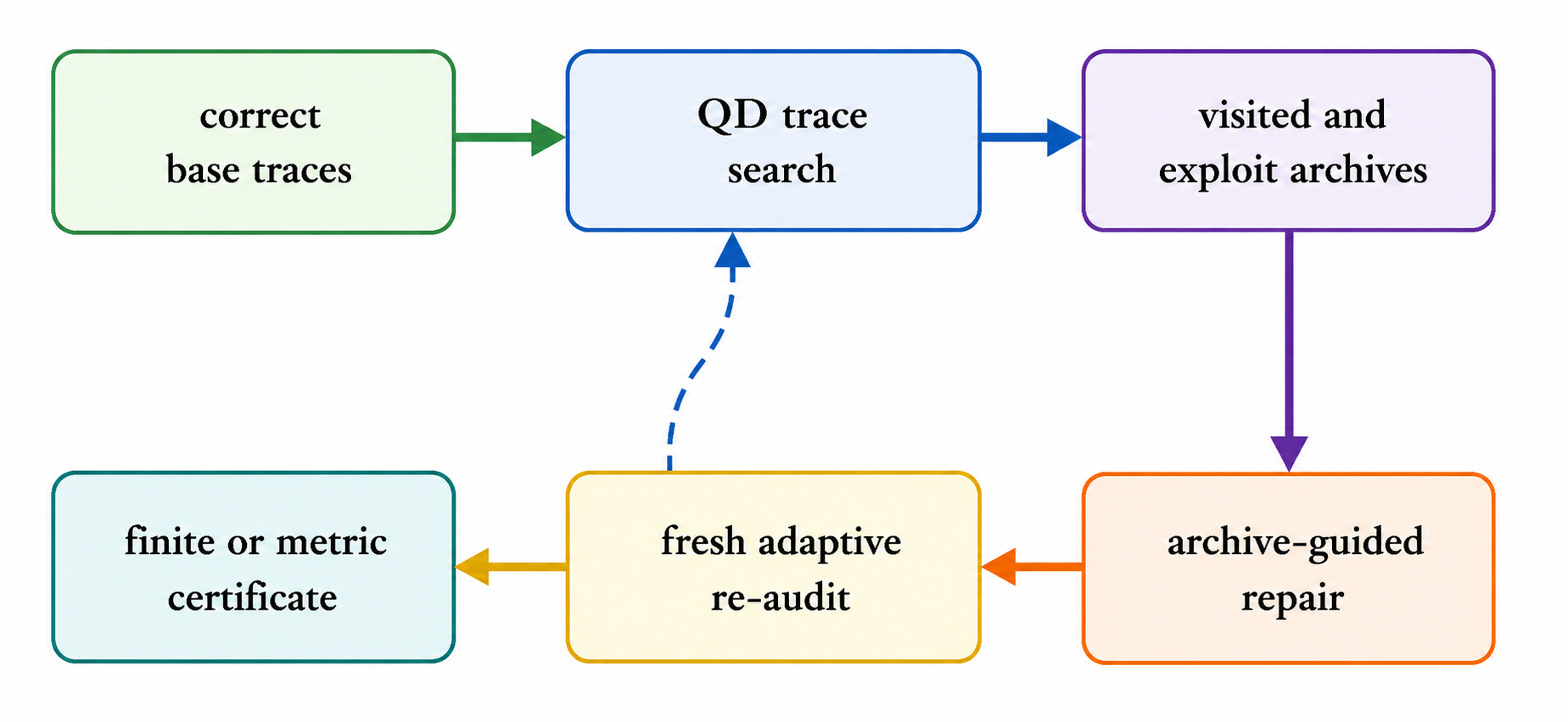}
    \caption{\textbf{The discovery-to-audit loop}. Search coverage and exploit coverage are recorded separately. Training on archived exploits is followed by a fresh adaptive search; operationally, a cell is treated as repaired only after that audit, while the theorem requires a valid cellwise upper bound. Finite-cell coverage controls an average and a tail, while a metric covering radius is needed for a worst-case certificate.}
    \label{fig:pipeline}
\end{figure}

The empirical evaluation consists of two parts. A controlled descriptor field makes every cell
severity known, allowing us to verify the finite accounting identities and to demonstrate directly
why a linear coverage-only worst-case claim fails. We then apply QD search to real PRMs on GSM8K
traces \citep{cobbe2021gsm8k}. The primary reproducible finding is a verification-dilution attack
against a mean readout of \texttt{Qwen2.5-Math-PRM-7B}. The same attack is structurally suppressed
by a minimum readout, although the larger run contains one strict min-readout exploit and therefore
does not support a claim of immunity. An RLHFlow value-head PRM behaves differently: positive
changes occur under both aggregations but are roughly fifty times smaller at their maximum. These
findings demonstrate why aggregation must be evaluated jointly with the PRM rather than presented
as a universal defense. The overall discovery-to-audit workflow is illustrated in
Figure~\ref{fig:pipeline}.

The paper makes four main contributions:

\begin{itemize}
    \item \textbf{A PRM-specific QD formulation} that preserves diverse correctness-flipping attacks while explicitly distinguishing visitation coverage from exploit coverage.

    \item \textbf{A robustness certification framework} comprising finite-cell average and tail certificates, an impossibility result for fraction-only worst-case guarantees, and a metric-cover theorem specifying the additional geometric structure required for valid worst-case certification.

    \item \textbf{A controlled diagnostic} that exposes the distinction between finite-cell guarantees and metric-cover worst-case guarantees.

    \item \textbf{An empirical evaluation} identifying an aggregation-dependent vulnerability, validating it through a syntactic negative control, analyzing cross-model transfer, and separating parser failures and numerically tiny effects from substantive robustness conclusions.
\end{itemize}

\section{Related Work}
\label{sec:related}

Process reward models (PRMs) were introduced to provide denser supervision than outcome-only reward
models and are now widely used for step-level verification, best-of-$n$ selection, tree search, and
process-based reinforcement learning
\citep{uesato2022solving,lightman2024lets,wang2024mathshepherd}. Their quality is typically
evaluated through error localization or by measuring how well they rank correct reasoning traces
above incorrect ones \citep{lambert2025rewardbench}. While these evaluations are essential, they do
not directly examine how a fixed PRM behaves when trace construction is optimized against its own
score. Our experiments instead condition on a verified correctness change and measure whether the
attacked trace receives a higher aggregate PRM score.

Reward hacking has been studied extensively in the context of outcome reward models. Optimizing a
policy against a fixed learned reward can ultimately reduce true task performance
\citep{gao2023scaling}. Adversarial training provides a general framework for improving robustness
against adversarial attacks \citep{madry2018towards}, and recent work has adapted these ideas to
reward models specifically \citep{bukharin2025adversarialrm}. More recent work jointly trains a
generator and a PRM to produce increasingly challenging process-level negatives
\citep{juneja2025adversarialprm}. Our focus is complementary. Rather than treating adversarial
training alone as evidence of robustness, we study how a diverse, explicitly indexed failure
archive changes what can be audited and what can be certified.

Additional discussion of related aggregation choices, quality-diversity search, and the distinction between our audited failure setting and prior PRM stress-testing approaches is provided in Appendix~\ref{app:ex_related_work}.

\section{Problem Formulation}
\label{sec:setup}

Let $x=(u,s)$ contain a problem $u$ and a reasoning trace $s$, and let
$q(x)\in\{0,1\}$ denote final-answer correctness under a deterministic task verifier. A PRM with
parameters $\theta$ emits step scores, which an aggregation rule converts into a trace score
$S_\theta(x)\in[0,1]$. The aggregation rule is part of the audited system: changing a mean readout
to a minimum readout changes $S_\theta$ even when the underlying step scores are unchanged.

An attack operator $\tau\in\X$ maps $x$ to an edited trace $\tau(x)$. We restrict the primary audit
to correct base traces, so $q(x)=1$, and verify the attacked answer independently. This makes the
correctness event discrete and prevents a score difference from being conflated with the unit-sized
change in the binary label.

\begin{definition}[Correctness-flipping exploit gain]
\label{def:gain}
The exploit gain of an attack instance $z=(x,\tau)$ is
\begin{align*}
\ell_\theta(z)
={}&\ind\{q(x)=1,\ q(\tau(x))=0\}\\
&\times\big[S_\theta(\tau(x))-S_\theta(x)\big]_+,
\end{align*}
where $[a]_+=\max(a,0)$. For an operational threshold $\xi\ge0$, $z$ is a
$\xi$-exploit when the correctness indicator is one and
$S_\theta(\tau(x))-S_\theta(x)>\xi$.
\end{definition}

This definition separates two facts that should not be added together: the attack has made the
answer wrong, and the proxy has moved in the wrong direction. Because $S_\theta\in[0,1]$, every
cell severity defined below also lies in $[0,1]$, making the reported maximum increases of $0.294$
and $0.005$ directly interpretable. The experiments report the strict convention $\xi=0$ together
with effect magnitudes; in deployment, $\xi$ should be set above reader and numerical tolerance.

Each attack has a behavioral descriptor $\beta(z)\in\D$. We partition $\D$ into $M$ cells and
write $\X_c=\{z:\beta(z)=c\}$ for the attack instances represented by cell $c$. The population
severity of a cell is
\[
g_\theta(c)=\supop_{z\in\X_c}\ell_\theta(z),
\qquad
g_{\max}=\max_{c\in\D}g_\theta(c),
\]
with the supremum defined as zero when a cell contains no valid correctness-flipping attack. An
empirical search only supplies a lower bound on $g_\theta(c)$; finding one elite in a cell does not
establish that the cell supremum has been found.

The distinction leads to three coverage quantities. If $A$ is the evaluated archive, then
$V(A)$ contains cells in which at least one valid candidate was scored, while $C_\xi(A)$ contains
cells whose empirical elite is a $\xi$-exploit. Their respective fractions are
\[
\rho_{\mathrm{visit}}(A)=\frac{|V(A)|}{M},
\qquad
\rho_{\mathrm{exp}}(A)=\frac{|C_\xi(A)|}{M}.
\]
After repair, a third set $\R(A)$ contains cells assigned a valid upper bound at a declared residual
level $\varepsilon$. Its fraction $\rho_{\mathrm{rep}}=|\R(A)|/M$ is the coverage that enters the
finite-cell certificate. A finite adaptive re-audit supplies empirical evidence for this condition
but, without additional structure, does not prove a population supremum. A cell is not declared
repaired merely because one archived example from it was used in training.

\section{Quality-Diversity Discovery}
\label{sec:qd}

MAP-Elites maintains one elite in each descriptor cell. For a fixed base trace, the search is
initialized from operator-specific seeds. At each iteration, it samples an existing elite, mutates
its operator or magnitude descriptor, instantiates the corresponding textual edit, verifies the
resulting final answer, and queries the PRM. A candidate replaces the incumbent elite whenever it
achieves a larger exploit gain. Unlike severity-only optimization, the archive preserves the
strongest attack within every explored region of the descriptor space, even when that attack is not
globally optimal. The resulting archive exposes a diverse repertoire of failure modes while
providing training pairs for archive-guided repair.

Our real-model instantiation employs five operator families crossed with five magnitude levels. The
operators append a confident but unjustified conclusion, insert locally plausible verification
padding, wrap an incorrect conclusion in authoritative mathematical language, overwrite the final
numeric result with a nearby incorrect value, or repeat the incorrect answer as though repetition
constituted evidence. The magnitude descriptor controls the amount of inserted or modified content.
Appendix~\ref{app:qd} details the archive update rule, the descriptor definitions, and the
accounting procedure used when attacks derived from different base traces map to the same cell.

The descriptor space is intentionally interpretable, but its limitations must be acknowledged. A
$5\times5$ grid does not constitute a cover of natural-language trace space. Two edits assigned to
the same cell may differ substantially in their semantics, while entirely new operator families may
lie outside the descriptor grid. Consequently, $\rho_{\mathrm{exp}}$ characterizes exploit coverage
only within the audited operator--magnitude space rather than across all possible PRM failures.
Section~\ref{sec:certificate} shows that meaningful worst-case guarantees require additional
geometric structure, expressed through a metric over complete attack instances together with an
associated covering radius.

\section{Archive-Guided Repair and Coverage Certificates}
\label{sec:certificate}

The positive elites in an archive form adversarial training pairs. A direct repair objective adds a
pairwise margin loss to the original clean-data objective,
\begin{align*}
&\mathcal{L}_{\mathrm{repair}}(\theta)
=\mathcal{L}_{\mathrm{clean}}(\theta)\\
&\ +\frac{\lambda}{|A_+|}
\sum_{(x,\tau)\in A_+}
\bigl[S_\theta(\tau(x))-S_\theta(x)+m\bigr]_+,
\end{align*}
where $A_+$ contains verified exploits and $m\ge0$ is a desired separation margin. The clean term
anchors the original PRM behavior and prevents a vacuous repair that lowers all scores. After
training, the system is attacked again from fresh seeds. The following statements apply to the
post-repair model when a certification procedure supplies a declared cellwise residual; a heuristic
re-audit is its empirical approximation rather than a formal upper bound.

Let $g_{\theta'}(c)$ denote the post-repair severity. For a repaired set $\R\subseteq\D$, define
\begin{align*}
\varepsilon_{\R}&=\max_{c\in\R}g_{\theta'}(c),\qquad
U_{\R}=\max_{c\notin\R}g_\theta(c),\\
\zeta_{\R}&=\max_{c\notin\R}
\big[g_{\theta'}(c)-g_\theta(c)\big]_+,
\end{align*}
where a maximum over an empty set is zero. The spillover term $\zeta_{\R}$ records the largest
increase induced by training in an unrepaired cell; assuming it is zero would generally be
unjustified for a neural model.

\begin{theorem}[Finite-cell coverage certificate]
\label{thm:finite}
Let $\rho_{\mathrm{rep}}=|\R|/M$ and suppose a valid cellwise certificate establishes
$\varepsilon_{\R}\le\varepsilon$. Then
\[
\max_{c\in\D}g_{\theta'}(c)
\le
\max\!\left\{\varepsilon,
U_{\R}+\zeta_{\R}\right\}.
\]
Moreover,
\[
\frac{1}{M}\sum_{c\in\D}g_{\theta'}(c)
\le
\rho_{\mathrm{rep}}\varepsilon
+(1-\rho_{\mathrm{rep}})(U_{\R}+\zeta_{\R}),
\]
and the fraction of cells whose post-repair severity exceeds $\varepsilon$ is at most
$1-\rho_{\mathrm{rep}}$.
\end{theorem}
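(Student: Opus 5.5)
The plan is a cellwise argument: bound $g_{\theta'}(c)$ separately for repaired and unrepaired cells, then aggregate those bounds three ways (maximum, average, counting).

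\textbf{Cellwise bounds.} For $c\in\R$, the definition of $\varepsilon_{\R}$ and the certificate hypothesis give
\[
g_{\theta'}(c)\le\varepsilon_{\R}\le\varepsilon .
\]
For $c\notin\R$, we write
\[
g_{\theta'}(c)=g_\theta(c)+\bigl(g_{\theta'}(c)-g_\theta(c)\bigr)\le g_\theta(c)+\bigl[g_{\theta'}(c)-g_\theta(c)\bigr]_+ ,
\]
using $a\le[a]_+$. The first term is at most $U_{\R}$ and the second at most $\zeta_{\R}$, both by their definitions as maxima over the complement of $\R$. So every unrepaired cell satisfies $g_{\theta'}(c)\le U_{\R}+\zeta_{\R}$.

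One edge case needs care: if $\R=\D$ or $\R=\emptyset$, the convention that an empty maximum is zero is used. It is consistent here because severities are nonnegative, so an empty-side bound of zero never conflicts with the inequalities.

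\textbf{Maximum.} Taking the maximum over $c\in\D=\R\cup(\D\setminus\R)$ of these cellwise bounds gives the first inequality directly.

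\textbf{Average.} Summing the cellwise bounds gives
\[
\sum_c g_{\theta'}(c)\le|\R|\,\varepsilon+(M-|\R|)(U_{\R}+\zeta_{\R}).
\]
Dividing by $M$ and substituting $\rho_{\mathrm{rep}}=|\R|/M$ gives the second inequality.

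\textbf{Tail fraction.} Any cell with $g_{\theta'}(c)>\varepsilon$ cannot lie in $\R$, since every repaired cell has severity at most $\varepsilon$. Hence the set of such cells is contained in $\D\setminus\R$. Its cardinality is therefore at most $M-|\R|$, and its fraction is at most $1-\rho_{\mathrm{rep}}$.

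\textbf{Main obstacle.} None of these steps is substantive; the proof is bookkeeping. The only care needed is keeping the spillover term in the unrepaired-cell bound, rather than assuming $g_{\theta'}\le g_\theta$ off $\R$, and handling the empty-set conventions noted above.
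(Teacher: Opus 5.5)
Your proposal is correct and matches the paper's proof step for step. It uses the same split of cells into $\R$ and its complement, the same decomposition $g_{\theta'}(c)\le g_\theta(c)+[g_{\theta'}(c)-g_\theta(c)]_+\le U_{\R}+\zeta_{\R}$ off $\R$, and the same maximum, sum and containment aggregation. Your explicit check of the empty-set conventions is a small addition that the paper leaves implicit.
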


The theorem separates the claims cleanly. The first inequality is an exact residual decomposition:
the worst case is either the audited residual on repaired cells or the severity, including harmful
spillover, outside them. The second and third statements are the coverage-indexed conclusions that
follow from a cell fraction. They control an average and a tail, not the largest uncovered cell.

\begin{proposition}[A coverage fraction cannot control the worst cell]
\label{prop:impossible}
For every $M\ge2$ and every attainable $\rho<1$, there is an exchangeable cell-severity field and a
perfectly repaired set covering fraction $\rho$ for which the uncovered supremum equals
$g_{\max}$. Hence, even with zero repaired-cell error, no universal bound of the form
$\max_c g_{\theta'}(c)\le h(\rho)g_{\max}$ with $h(\rho)<1$ can follow from coverage fraction alone.
\end{proposition}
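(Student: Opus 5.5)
The proof is an explicit construction. Fix $M\ge2$ and an attainable fraction $\rho=k/M$ with $0\le k\le M-1$. I would take the pre-repair severity field to be constant: $g_\theta(c)=g^\star$ for every cell $c\in\D$, with some $g^\star\in(0,1]$, say $g^\star=1$. This field is exchangeable, since it is invariant under every permutation of the cells. It also has $g_{\max}=g^\star>0$, so the conclusion is not vacuous. If one prefers a random exchangeable field, the same argument works for i.i.d. severities conditioned on their maximum being attained at an uncovered cell. The deterministic constant field is the simplest choice, and it avoids any dependence on the order in which cells are selected.

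Next I choose any repaired set $\R$ with $|\R|=k$; it is nonempty-complemented because $k<M$. The post-repair model $\theta'$ is then defined cellwise. On $\R$ I set $g_{\theta'}(c)=0$, so repair is perfect and $\varepsilon_{\R}=0$. Off $\R$ I set $g_{\theta'}(c)=g_\theta(c)=g^\star$, so the spillover is $\zeta_{\R}=0$. Then the uncovered supremum is $\max_{c\notin\R}g_{\theta'}(c)=g^\star=g_{\max}$. As a consistency check, this is exactly the case where the first bound of Theorem~\ref{thm:finite} is tight, $\max\{\varepsilon,U_{\R}+\zeta_{\R}\}=g_{\max}$. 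So Theorem~\ref{thm:finite} is not contradicted; the construction only shows that its dependence on $U_{\R}$ cannot be replaced by a function of $\rho$.

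For the "hence" clause, suppose some universal $h$ satisfied $\max_c g_{\theta'}(c)\le h(\rho)g_{\max}$ for all fields and repairs with coverage fraction $\rho$. Apply this to the construction. It gives $g_{\max}\le h(\rho)g_{\max}$, and since $g_{\max}>0$ this forces $h(\rho)\ge1$. That contradicts $h(\rho)<1$. The argument holds for every attainable $\rho<1$, including $\rho=(M-1)/M$, so near-complete coverage still does not help.

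The only real obstacle is making sure the construction is realizable as severities of actual attack sets $\X_c$ under models $\theta,\theta'$, rather than just as abstract numbers. Because the proposition is stated at the level of cell-severity fields, I would treat severities as freely assignable values in $[0,1]$, as in the controlled landscape. If needed, I would add a remark that any field in $[0,1]^M$ is induced by some pair of score functions $S_\theta,S_{\theta'}$ on one correctness-flipping instance per cell.
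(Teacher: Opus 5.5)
Your proof is correct and is essentially the paper's argument: a constant, permutation-invariant field with $g_{\max}>0$, and an arbitrary repaired set of size $k<M$ driven to zero with no spillover, so every uncovered cell keeps severity $g_{\max}$ and any valid multiplier must satisfy $h(\rho)\ge1$. One small caution on your optional aside: conditioning an i.i.d.\ field on its maximum lying outside a fixed $\R$ destroys exchangeability when $0<k<M$ (you would instead let $\R$ depend on the realization), so the constant field you actually use is the right choice.
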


Proposition~\ref{prop:impossible} rules out the appealing but incorrect $(1-\rho)g_{\max}$
worst-case law. To obtain a worst-case guarantee that improves continuously with coverage, the
archive must say how far every possible attack lies from an audited example.

\begin{theorem}[Metric-cover certificate]
\label{thm:metric}
Let $\X_{\mathrm{flip}}$ be the set of valid correctness-flipping attack instances, let
$(\X_{\mathrm{flip}},d)$ be compact, and let $B\subset\X_{\mathrm{flip}}$ be a
finite audited set with covering radius
\[
r(B)=\sup_{z\in\X_{\mathrm{flip}}}\min_{b\in B}d(z,b).
\]
Suppose the post-repair exploit loss $\ell_{\theta'}$ is $L$-Lipschitz with respect to $d$ and the
audit establishes $\ell_{\theta'}(b)\le\varepsilon$ for every $b\in B$. Then
\[
\sup_{z\in\X_{\mathrm{flip}}}\ell_{\theta'}(z)
\le \varepsilon+Lr(B).
\]
\end{theorem}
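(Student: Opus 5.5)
The proof is a pointwise triangle-type argument. I will fix an arbitrary $z\in\X_{\mathrm{flip}}$, choose an audited point of $B$ near it, and move from $z$ to that point at a cost of at most $L$ times their distance. Taking the supremum over $z$ then gives the bound.

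Fix $z\in\X_{\mathrm{flip}}$. Because $B$ is finite and nonempty (needed for $r(B)$ to be finite; if $B$ were empty the covering radius would be $+\infty$ and the bound would be vacuous), the minimum $\min_{b\in B}d(z,b)$ is attained at some $b_z\in B$. By the definition of the covering radius,
\[
d(z,b_z)=\min_{b\in B}d(z,b)\le \sup_{z'\in\X_{\mathrm{flip}}}\min_{b\in B}d(z',b)=r(B).
\]
The $L$-Lipschitz hypothesis on $\ell_{\theta'}$ with respect to $d$ gives
\[
\ell_{\theta'}(z)\le \ell_{\theta'}(b_z)+L\,d(z,b_z)\le \varepsilon+L\,r(B),
\]
where the second inequality uses the audit guarantee $\ell_{\theta'}(b_z)\le\varepsilon$ together with the previous display and $L\ge 0$.

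The right-hand side does not depend on $z$, so taking the supremum over $z\in\X_{\mathrm{flip}}$ yields the claim. Compactness is not strictly needed for the inequality itself. It guarantees that $r(B)<\infty$, since a compact metric space is bounded. Together with continuity of $\ell_{\theta'}$ (implied by the Lipschitz property), it also ensures the supremum is attained, so the bound is a genuine worst-case statement about a realized attack.

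There is no real obstacle. The only point needing care is that $B\subset\X_{\mathrm{flip}}$, so that the Lipschitz property applies between $z$ and $b_z$ within a single metric space. I would also remark that the bound says nothing about instances outside $\X_{\mathrm{flip}}$, but those have zero exploit gain by Definition~\ref{def:gain}.
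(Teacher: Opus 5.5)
Your proof is correct and is essentially the paper's argument: take a nearest audited point $b_z\in B$ (attained because $B$ is finite), combine the Lipschitz bound with the audit residual to get $\ell_{\theta'}(z)\le\varepsilon+L\,d(z,b_z)\le\varepsilon+L\,r(B)$, and take the supremum over $z$. Your remark that compactness serves only to make $r(B)$ finite also matches the paper's closing comment.
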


This theorem states the additional burden hidden by a cell count. The metric must include the base
problem and the semantic attack, or the descriptor must be sufficient for the loss; a radius
computed only from operator identifiers does not control variation inside a cell. In practice,
$L$, $r(B)$, and the audit residual must be reported or upper-bounded. The complete proofs of
Theorem~\ref{thm:finite}, Proposition~\ref{prop:impossible}, and
Theorem~\ref{thm:metric} appear in Appendix~\ref{app:proof}.

The aggregation mechanism found in the real-model search has a separate algebraic explanation.

\begin{proposition}[Padding under mean and minimum readouts]
\label{prop:pooling}
Let a trace have $k$ step rewards with mean $\bar r$ and minimum $r_{\min}$. Append $m$ steps whose
mean reward is $\bar u$ and minimum reward is $u_{\min}$. The mean score changes by
\[
\frac{m}{k+m}(\bar u-\bar r),
\]
whereas the minimum score becomes $\min(r_{\min},u_{\min})$ and therefore cannot increase under an
append-only edit.
\end{proposition}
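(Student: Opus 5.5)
The argument is a direct computation with sums and minima over the concatenated list of step rewards. No earlier result is needed. I write the original rewards as $r_1,\dots,r_k$ and the appended ones as $u_1,\dots,u_m$, so that $\sum_i r_i = k\bar r$ and $\sum_j u_j = m\bar u$.

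For the mean readout, the padded trace has $k+m$ steps and total reward $k\bar r + m\bar u$. Its mean is therefore $(k\bar r+m\bar u)/(k+m)$. Subtracting $\bar r = (k+m)\bar r/(k+m)$ gives $(m\bar u - m\bar r)/(k+m)$, which equals $\frac{m}{k+m}(\bar u-\bar r)$. This shows that padding raises the mean score exactly when the appended steps score above the original average. The size of the increase is the gap $\bar u - \bar r$ multiplied by the fraction $m/(k+m)$ of steps that are padding.

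For the minimum readout, I use that the minimum of a union of finite multisets is the minimum of their individual minima. This gives
\[
\min\{r_1,\dots,r_k,u_1,\dots,u_m\}=\min(r_{\min},u_{\min}).
\]
Since $\min(r_{\min},u_{\min})\le r_{\min}$, the minimum score cannot exceed its pre-edit value.

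The one point that needs care is the assumption behind ``append-only'': the original $k$ step rewards must be unchanged after the edit. A PRM scores each step conditioned on its prefix. Appending steps after the last one leaves every prefix of the original steps unchanged, so those $k$ scores stay the same. I will state this explicitly, since it is the only modeling step and the rest is arithmetic. I will also note that the case $m=0$ is trivial.
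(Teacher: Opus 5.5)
Your proof is correct and matches the paper's argument essentially line for line. You compute the new mean $(k\bar r+m\bar u)/(k+m)$ and subtract $\bar r$, then use that the minimum over the concatenated rewards is $\min(r_{\min},u_{\min})\le r_{\min}$; your remark that the original $k$ step scores are unchanged under an append-only edit makes explicit a hypothesis the paper uses implicitly (and which the statement already builds in), so it is a helpful clarification rather than a missing step.
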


The proposition does not imply that minimum aggregation is universally robust. An edit that removes
or rewrites the original minimum-scoring step can still raise the minimum, and a model can assign a
high score to every incorrect step. It predicts only that high-scoring padding can dilute a mean but
cannot hide an existing low score from a minimum.

\section{Experiments}
\label{sec:exp}
The experiments are organized around the distinctions introduced above. We first examine what a
cell fraction reveals when the population severity is fully known. We then evaluate whether the QD
procedure discovers correctness-flipping score increases on a real PRM, whether the mechanism
persists under a different aggregation rule, and whether the resulting attacked traces transfer
across independently trained reward heads. The controlled field serves as an accounting experiment
rather than a surrogate for neural fine-tuning, whereas the real-model study comprises both a
discovery audit \emph{and} an executed repair evaluated by fresh adaptive re-attack, including a
clean-split rerun that tests generalization to unseen problems
(\S\ref{sec:realrepair}). Maintaining this distinction prevents a simulated repair from being
presented as evidence of an empirical defense.

\paragraph{Controlled coverage on a known field}
\label{sec:sim}

We begin with a $24\times24$ descriptor grid comprising $576$ cells. Each cell is assigned a
nonnegative latent severity generated from a smooth mixture of Gaussian bumps and low-amplitude
noise, producing a small number of high-severity regions alongside many benign ones. The original
field is normalized by its raw maximum of $1.201$, yielding severities in $[0,1]$ consistent with
Definition~\ref{def:gain}. MAP-Elites explores the grid through local descriptor mutations. At
each evaluation budget, an oracle repair sets the severity of archived cells to the corresponding
normalized floor $\varepsilon=0.0167$ while leaving all remaining cells unchanged. Because this
intervention is defined at the cell level, it directly tests the finite-cell accounting of
Theorem~\ref{thm:finite}; it does not evaluate whether gradient-based training would satisfy the
same condition.

\begin{table}[t]
\centering
\small
\resizebox{\columnwidth}{!}{%
\begin{tabular}{rrrrc}
\toprule
\textbf{QD evals} & \textbf{$\rho$} & \textbf{post-repair sup} & \textbf{uncovered sup} & \textbf{certificate} \\
\midrule
200  & 0.29  & 1.0000 & 1.0000 & holds \\
500  & 0.55  & 0.9542 & 0.9542 & holds \\
1000 & 0.82  & 0.9542 & 0.9542 & holds \\
2000 & 0.96  & 0.6162 & 0.6162 & holds \\
4000 & 0.998 & 0.1490 & 0.1490 & holds \\
8000 & 1.00  & 0.0167 & 0.0000 & holds \\
\bottomrule
\end{tabular}}
\caption{Controlled field after normalizing the raw severities by their original maximum $1.201$,
so that the pre-repair $g_{\max}=1$. The post-repair supremum equals the maximum of the
optimization floor and the uncovered supremum, exactly as stated by the finite-cell certificate.
High cell coverage does not imply a proportionally small worst remaining cell: even at
$\rho=0.96$, the residual remains $0.6162$.}
\label{tab:sim}
\end{table}

Table~\ref{tab:sim} illustrates two complementary observations. First, the finite-cell residual
certificate holds exactly at every evaluation checkpoint, and the post-repair supremum decreases
monotonically as increasingly severe cells are eventually covered. Second, this decrease is not
linear in coverage. Between coverages $0.55$ and $0.82$, the worst remaining severity is
unchanged, and even at $\rho=0.998$, the single uncovered region retains severity $0.1490$. The
regression of the post-repair supremum on the uncovered supremum has slope $0.99$ and
$R^2=0.9999$ after including the optimization-floor row, serving as an implementation check of the
accounting rather than evidence for a proportional worst-case law. Complete construction and
reporting details are provided in Appendix~\ref{app:sim}.

\paragraph{Real-model protocol}
\label{sec:realsetup}

The primary audit evaluates \texttt{Qwen2.5-Math-PRM-7B}
\citep{zhang2025qwenprmlessons}. We apply the model's chat template and
step-separator token, reading the positive-class softmax probability at each separator rather than
using raw logits. Base examples are correct multi-step GSM8K solutions. An attack is counted only
when exact-match verification confirms that the base answer is correct, the edited answer is
incorrect, and the aggregate PRM score increases strictly. Each trace is initialized with five
operator-specific seeds followed by $40$ mutation evaluations, yielding $45$ evaluated candidates
per readout. Counts are therefore candidate-level descriptive statistics, whereas archive coverage
counts distinct descriptor cells and is bounded above by $25$.

We evaluate both the mean and minimum of the emitted step probabilities. A matched syntactic
control (arithmetic corruption, dropped steps, operand swaps, and contentless padding) tests
whether the search rewards arbitrary edits. The current evaluation adopts the strict threshold
$\xi=0$. Because numerically tiny floating-point changes satisfy this convention, we report the
maximum gain and interpret near-threshold counts cautiously. Appendix
\ref{app:realprm} details the reader, correctness verification, archive construction, and
thresholding rules.

\paragraph{An aggregation-dependent Qwen vulnerability}
\label{sec:realprm}

The initial run attacks the $67$ of $80$ sampled traces whose base
answers pass exact-match verification: $16$ strict exploits under mean
pooling, none under minimum, coverage $0.16$, and mean rise $0.039$
(Table~\ref{tab:realprm}, appendix). Elites are dominated by
verification dilution: a plausible incorrect final answer is followed
by locally fluent verification steps whose scores exceed the trace
mean. Proposition~\ref{prop:pooling} predicts this behavior exactly.
The same padding cannot increase an existing minimum, explaining the
aggregation contrast without interpreting a zero count as a robustness
guarantee.

\paragraph{Syntactic control: padding, not deception, is the mechanism.}
A matched control replaces the deceptive operators with five syntactic
operators under the identical protocol and budget: $47$ strict
mean-readout exploits, \emph{all} from the two padding operators and
none from the three pure-corruption operators. Every control edit
applies the same fixed final-number corruption (ensuring a verified
correctness flip), while the score \emph{increase} arises from padding
alone. Thus, any locally innocuous padding exploits mean pooling
(Proposition~\ref{prop:pooling}), superseding the earlier
deception-required interpretation (appendix Syntactic control).

The larger run over $120$ traces reproduces both the mechanism and its
boundary: $44$ strict mean-pooling exploits, ten occupied exploit cells
($\rho_{\mathrm{exp}}=0.40$), mean rise $0.087$, and maximum $0.294$.
Minimum pooling yields one strict exploit rather than none, indicating
that the minimum readout strongly suppresses the tested padding
mechanism on this PRM without providing immunity.

\paragraph{Problem-level uncertainty and audit thresholds.}
Using the problem as the resampling unit, the mean-readout exploit rate
is $0.168$ $[0.103,0.243]$ ($n{=}107$), with a best-gap $95$th
percentile of $0.097$ $[0.029,0.237]$. The threshold curve
($\tau=0.01/0.02/0.05/0.1/0.2$: $0.150/0.140/0.084/0.037/0.028$)
shows that strict counts overstate the number of exploits surviving
practical margins.
\paragraph{Equal-budget baselines.}
At the identical $45$-evaluation budget, random perturbation and
exhaustive grid enumeration occupy \emph{more} exploit cells than
MAP-Elites ($0.48$ vs.\ $0.40$; single-objective collapses to
$0.12$). With only $25$ cells, exhaustive evaluation is inexpensive,
so this experiment does \emph{not} demonstrate that QD search is
necessary to achieve coverage at this grid size. The reported severity
statistics follow different conventions across methods (baselines:
nonnegative per-problem best gain, mean $0.017$--$0.018$; archive:
signed per-problem best gaps, exploited-candidate mean rise $0.087$,
maximum $0.294$), and we therefore make no severity claim based on
these columns. The corresponding same-statistic comparison is
\emph{performed} in \S\ref{sec:pairedrepair}
(Table~\ref{tab:cleanutil}), where equal-budget exhaustive enumeration
slightly exceeds MAP-Elites. The contribution of the archive is instead
its behavior-indexed structure, which supports the repair and
certificate diagnostics.

\paragraph{Model dependence and transfer}
\label{sec:multiprm}

The same reader-normalized search on an RLHFlow Llama-3.1 value-head
PRM \citep{xiong2024rlhflow} provides an informative contrast.
Hundreds of strictly positive candidates are observed under both
readouts, and every descriptor cell is occupied, yet the maximum score
increase is only $0.005$---approximately fifty times smaller than for
Qwen. Consequently, a deployment threshold above this scale may
eliminate most or all strict events. The results therefore indicate
model dependence without providing evidence of severe RLHFlow
exploitability (full accounting in Table~\ref{tab:multiprm},
appendix).

Transfer re-scores the strict mean-readout exploits of one PRM on the
other. A transfer is counted when the target PRM also strictly prefers
the incorrect edited trace over its correct base. All Qwen exploits
transfer to RLHFlow, and $0.93$ of RLHFlow exploits transfer to Qwen
(Table~\ref{tab:transfer}, appendix). The attack templates are
therefore not specific to a single reader, although the strict
threshold and the small RLHFlow effects preclude a stronger
operational-severity claim.

An attempted Skywork-o1 audit \citep{skywork2024prm} did not produce a
valid model result because the uniform reader failed to parse the
model's native step-reward channel, returning constant zeros
throughout. These zeros, together with the corresponding zero transfer
entries, are excluded from the evidential tables. Appendix
\ref{app:readerfailure} documents the failed run to avoid its
misinterpretation as evidence of robustness. A meaningful Skywork
comparison requires a model-specific reader validated against reference
outputs before attack evaluation.

\paragraph{Certificate diagnostics on the real archive (plug-in, not certified)}
\label{sec:realcert}

We instantiate both certificate formulas on Qwen's measured mean-readout
archive as \emph{plug-in diagnostics}, explicitly \emph{not} certified
worst-case residuals, because five inputs are empirical lower bounds or
unverified assumptions (searched severities, the observed ratio
$\widehat L=0.46$, a descriptor-cell metric,
$\rho_{\mathrm{exp}}$ in place of $\rho_{\mathrm{rep}}$, and unbounded
spillover; the complete list appears in the
caption of Table~\ref{tab:realcert}). The plug-in arithmetic nevertheless
illustrates the central point: ten exploit cells
($\rho_{\mathrm{exp}}=0.40$, $g_{\max}=0.294$), zero measured severity
elsewhere, and a finite-cell residual
$\varepsilon_{\mathrm{opt}}=0.02$; the metric-cover plug-in remains
$0.48$ for \emph{every} audited subset smaller than the full grid
(radius $1$ at $|B|=5$--$20$), decreasing to $0.02$ only when all $25$
cells are audited (radius $0$). Audited-cell \emph{count} alone
provides no improvement until the audited set forms a genuine cover.
Values are taken from \texttt{results\_certificate/}.

\paragraph{Executed archive-guided repair on the real PRM}
\label{sec:realrepair}

We now execute the repair stage on \texttt{Qwen2.5-Math-PRM-7B} rather
than modeling it. The first \emph{implemented} objective is simpler
than the pairwise-margin loss of
\S\ref{sec:certificate}: a LoRA adapter (rank $8$,
$\alpha{=}16$, dropout $0.05$, query/value) is trained for $60$ AdamW
steps (lr $10^{-4}$, batch $1$) to down-weight archived exploits
through the model's own readout, without a margin or clean term;
clean behavior is evaluated post hoc. Against a fresh adaptive
re-attack, the comparison is intentionally \emph{unpaired}: full-pool
discovery identifies $44$ exploits ($0.168$ of $107$), whereas the
post-repair re-attack finds $1$ (maximum gain $0.0012$ versus the
pre-repair $0.294$), while clean ranking remains $58/58$; the matched
paired baseline is reported in \S\ref{sec:pairedrepair}. The stricter
clean-split rerun (repair on $13$ repair-half exploits followed by
re-attack on the disjoint half) likewise yields $1$ strict exploit,
maximum gain $0.0092$, and clean ranking $58/58$. The pipeline cannot
repair the RLHFlow value-head PRM because it is not differentiable in
our reader; this limitation is reported rather than omitted. Values are
taken from \texttt{results\_defense/} and
\texttt{results\_gsm8k\_cleansplit/}.

\paragraph{Benchmark generalization: MATH-500.}
Applying the identical protocol to MATH-500
\citep[the][subset]{hendrycks2021math,lightman2024lets} ($82$ of $120$
sampled traces attacked) reproduces the qualitative findings on harder
problems: $41$ strict mean-readout exploits (problem-level rate
$0.232$ $[0.146,0.329]$), exploit-cell coverage $0.32$, and a worst-cell
gap of $0.232$; the threshold curve decreases from $0.207$ at
$\tau{=}0.01$ to $0.024$ at $\tau{=}0.1$. Minimum pooling yields
$19$ strict exploits whose largest cell gap, $0.020$, is an order of
magnitude smaller than the mean-readout value of $0.232$, indicating
that on harder problems minimum pooling suppresses \emph{severity}
rather than exploit counts. The post-repair fresh re-attack finds
$0$ exploits (coverage $0$; $g_{\max}=0$ under the nonnegative-gain
convention, maximum \emph{signed} difference $-0.0003$), while all
$40/40$ held-out clean pairs remain correctly ranked. Values are taken
from \texttt{results\_math500/}.

\paragraph{Paired repair with the method's loss}
\label{sec:pairedrepair}

The predeclared paired protocol (endpoints fixed before results; no
public preregistration) addresses the remaining evaluation gaps on
GSM8K: a $53/54$ repair/evaluation split, discovery restricted to the
repair half ($20$ exploits, all
\texttt{verify\_dilution}), and matched pre/post fresh re-attacks using
the method's margin-plus-clean-anchor loss
($0.10$/$\lambda{=}0.5$, cached clean anchors; LoRA
r$8$/$\alpha16$, AdamW $10^{-4}$, $60$ steps; $3$ repair
$\times$ $2$ attack seeds). The frozen primary endpoint is the
per-problem best nonnegative gain averaged over attack seeds, ensuring
that every reported column is internally consistent
(Table~\ref{tab:paired}): the pre-repair rate is $8/54=0.148$, the
post-repair rate is $2/54$--$4/54$, and all paired confidence intervals
exclude zero ($\Delta$gain $-0.012$ to $-0.015$; hierarchical
$-0.014$ $[-0.028,-0.003]$). Residual severity also decreases: the
maximum gain falls from $0.333$ to $0.177$--$0.212$
(q95 $0.146\to\le0.034$), indicating reductions in both prevalence and
the upper tail, although worst-case attacks with approximately
two-thirds of the original severity remain. AUROC improves
($0.884\to0.915$--$0.935$), best-of-$4$ remains $1.00$, and the Brier
score worsens ($0.342\to0.385$ worst). 
The v2 run additionally evaluates \emph{repair-source} controls under
the same pair budget and optimization steps. Random, exhaustive, and
strongest-only attack sets all achieve $2/54$ post-repair exploits,
with reductions at least as large as those obtained from the QD archive
($-0.016$ to $-0.017$), whereas the clean-only control produces no
change ($\Delta\equiv0$). These results indicate that adversarial
fine-tuning drives the repair, whereas our claim for the archive is
limited to its audit structure. The unseen-wording evaluation
(variant-B templates, never used for training) is only weakly
informative: variant B produces few exploits even before repair (rate
$0.037$), and post-repair rates are $0$--$0.019$. This shows no
regression but provides insufficient pre-repair signal to support a
wording-generalization claim. Math-Shepherd likewise has little to
repair (rate $0.019$). Complete results are reported in
Tables~\ref{tab:cleanutil}--\ref{tab:paired} (appendix;
\texttt{results\_paired\_v2/}). Additional implementation details, repair protocols, control experiments, and supplementary analyses are provided in Appendix~\ref{app:repairprotocol}. Supplementary figures illustrating repair behavior, aggregation effects, threshold sensitivity, generalization, and clean-utility evaluation are presented in Appendix~\ref{sec:supp_figures}.

\section{Discussion and Conclusion}
\label{sec:discussion}

The observed pattern is more precise than the broad claim that ``PRMs
are easily hacked'': padding exploits a mean readout when inserted
scores exceed the trace average, whereas a minimum readout blocks that
route but not lowest-step edits or a confidently wrong PRM. These
results indicate that PRM vulnerabilities are not governed by a single
failure mode, but emerge from the interaction between attack structure,
score aggregation, and the reliability of the learned evaluator.
Coverage describes a repertoire rather than a worst-case guarantee
(Figure~\ref{fig:coverage}); certificate numbers are plug-in
diagnostics (\S\ref{sec:realcert}) rather than standalone robustness
certificates. Repairs remain effective on unseen problems, MATH-500,
and the paired protocol, with the repair-source controls attributing the
effect to adversarial fine-tuning rather than the archive
(\S\ref{sec:pairedrepair}). Together, these findings support adaptive
stress testing and repair as complementary components for evaluating
the reliability of process reward models under adversarial
optimization.

\section*{Limitations}

The real study uses modest problem subsets ($107$--$120$ traces per benchmark on GSM8K and
MATH-500) and a hand-designed $5\times5$ descriptor grid. The strict threshold $\xi=0$ is
sensitive to small reader differences, particularly for RLHFlow, and candidate counts are not
independent because multiple mutations share a base problem and archive lineage
(problem-level bootstrap intervals and threshold curves are reported in
\S\ref{sec:realprm}). The real-archive certificate numbers are plug-in
diagnostics, not certified bounds (\S\ref{sec:realcert}). The paired v2
protocol (\S\ref{sec:pairedrepair}) supplies matched pre/post
re-attacks, reconciled union statistics, hierarchical bootstrap over
problems, repair seeds, and attack seeds, equal-budget repair-source
controls, and a full clean-utility panel; its remaining gaps are honest
and specific: worst observed post-repair attacks retain about
two-thirds of the original maximum gain; the Brier score degrades
slightly post-repair (score inflation on clean and corrupted traces
alike); the repair-source controls show the QD archive is not a better
repair-data source than random or exhaustive attack sets at this scale,
so the quality-diversity claim concerns the auditable archive, not
search or repair superiority; and the unseen-wording arm is weakly
informative because the variant templates barely exploit even the
original model, while a genuinely open-ended attack generator remains
unexecuted.  All exploits in this channel come from one operator family
(\texttt{verify\_dilution}), so cross-family repair generalization is
unmeasured.  The clean-split repair
establishes generalization to unseen \emph{problems}, but not to unseen
templates or operator families (Appendix~\ref{app:repairprotocol}); the
repair pipeline does not apply to non-differentiable readouts such as
the RLHFlow value head.
ProcessBench~\citep{zheng2024processbench}/PRMBench~\citep{song2025prmbench}
discrimination and a downstream reasoning evaluation of the repaired PRM
are specified but not run; no number for either appears in this paper. Finally, the edit operators are
textual and black-box; gradient-, logit-, or policy-guided adversaries could expose failures
outside the descriptor family. These limitations bound the empirical claim to a reproducible
diagnostic finding and motivate the broader evaluation protocol in the appendix.

\bibliography{references}

\appendix

\section{Extended Related Work}
\label{app:ex_related_work}

Aggregation is a well-established determinant of PRM behavior. Sum- or accumulation-based credit
assignment can reward long or repetitive generations, whereas minimum-based aggregation can
suppress some of these incentives \citep{cheng2025stopsummation}. We do not claim minimum
aggregation as a methodological contribution. Instead, the mean--minimum comparison serves as a
controlled diagnostic for the mechanism uncovered by the QD search, while the RLHFlow results
demonstrate that the effect is model dependent. This distinction is important because a minimum
readout may itself be sensitive to a single spuriously low score and may not preserve the ranking
utility of the original PRM.

Our search framework builds on quality-diversity (QD) optimization. MAP-Elites and related
illumination algorithms preserve high-performing solutions throughout a user-defined behavior space
\citep{mouret2015illuminating,pugh2016quality,lehman2011abandoning}, extending ideas originally
developed for learning diverse behavioral repertoires \citep{cully2015robots}. QD has also been
applied to language-model red teaming. Rainbow Teaming formulates adversarial prompt generation as
open-ended QD search, and subsequent work learns behavior-conditioned attackers across multiple
risk categories \citep{samvelyan2024rainbow,wang2025qdrt,samvelyan2024rainbowplus}. Our contribution
is therefore not the general observation that QD can diversify attacks, nor the claim that
MAP-Elites is inherently a superior search algorithm or repair-data source. Indeed, under our
small fixed operator grid and equal search budget, exhaustive enumeration matches MAP-Elites on
both discovery and repair performance (\S\ref{sec:pairedrepair}). Rather, our contribution is the
specialization of QD to process-reward traces, the formulation of the behavior-indexed
\emph{archive as an auditable object}, the explicit separation of distinct coverage notions, and
the characterization of which forms of coverage support which robustness guarantees.

The closest PRM-specific stress-testing framework is EST-PRM, which measures score inflation and
correlation collapse under fixed, label-preserving transformations such as step inflation,
dependency-aware reordering, and confidence markers \citep{shihab2026estprm}. Our setting changes
the audited event. We begin with correct reasoning traces and search for edits that simultaneously
flip final correctness and increase the aggregate PRM score. The resulting QD archive therefore
consists of verified proxy-improving failures rather than a label-preserving invariance suite. The
overlap in structural perturbations remains important, and we accordingly use those
transformations as informed operator seeds rather than presenting them as novel attack classes.

\section{Complete Proofs}
\label{app:proof}

We restate the relevant notation. The finite descriptor partition contains $M$ cells. The
pre-repair and post-repair population severities are $g_\theta(c)$ and $g_{\theta'}(c)$,
respectively. The repaired set is $\R$, its coverage is
$\rho_{\mathrm{rep}}=|\R|/M$, and
\[
\varepsilon_{\R}=\max_{c\in\R}g_{\theta'}(c),
\qquad
U_{\R}=\max_{c\notin\R}g_\theta(c),
\]
\[
\zeta_{\R}=\max_{c\notin\R}
\big[g_{\theta'}(c)-g_\theta(c)\big]_+.
\]
All severities are nonnegative. We use the convention that a maximum over the empty set is zero.

\paragraph{Proof of the finite-cell certificate}

\begin{proof}[Proof of Theorem~\ref{thm:finite}]
Partition the descriptor cells into $\R$ and its complement. For every $c\in\R$, the post-repair
audit gives
\[
g_{\theta'}(c)\le\varepsilon_{\R}\le\varepsilon.
\]
For every $c\notin\R$, write the post-repair severity as its pre-repair value plus its change:
\begin{align*}
g_{\theta'}(c)
&=g_\theta(c)+g_{\theta'}(c)-g_\theta(c)\\
&\le g_\theta(c)+\big[g_{\theta'}(c)-g_\theta(c)\big]_+\\
&\le U_{\R}+\zeta_{\R}.
\end{align*}
Taking the maximum over the two parts of the partition gives
\[
\max_{c\in\D}g_{\theta'}(c)
\le
\max\!\left\{\varepsilon,U_{\R}+\zeta_{\R}\right\}.
\]

For the average, sum the corresponding cellwise bounds:
\begin{align*}
\frac{1}{M}\sum_{c\in\D}g_{\theta'}(c)
&=
\frac{1}{M}\sum_{c\in\R}g_{\theta'}(c)
+\frac{1}{M}\sum_{c\notin\R}g_{\theta'}(c)\\
&\le
\frac{|\R|}{M}\varepsilon
+\frac{M-|\R|}{M}(U_{\R}+\zeta_{\R})\\
&=
\rho_{\mathrm{rep}}\varepsilon
+(1-\rho_{\mathrm{rep}})(U_{\R}+\zeta_{\R}).
\end{align*}

Finally, every cell in $\R$ has post-repair severity at most $\varepsilon$. Therefore
\[
\{c:g_{\theta'}(c)>\varepsilon\}
\subseteq \D\setminus\R,
\]
and hence
\[
\frac{1}{M}
\left|\{c:g_{\theta'}(c)>\varepsilon\}\right|
\le\frac{M-|\R|}{M}
=1-\rho_{\mathrm{rep}}.
\]
All three conclusions are deterministic and make no exchangeability assumption.
\end{proof}

If a deployment distribution is uniform over cells, the average inequality is also an expected-risk
bound. More generally, for declared cell weights $w_c\ge0$ with $\sum_cw_c=1$, the same proof gives
\[
\sum_c w_c g_{\theta'}(c)
\le
w(\R)\varepsilon
+\sum_{c\notin\R}w_c
\left(g_\theta(c)+\zeta_c\right),
\]
where $w(\R)=\sum_{c\in\R}w_c$ and
$\zeta_c=[g_{\theta'}(c)-g_\theta(c)]_+$. This weighted form is preferable when attack cells are not
equally likely in deployment.

\paragraph{Proof that fraction coverage cannot control a supremum}

\begin{proof}[Proof of Proposition~\ref{prop:impossible}]
Fix $M\ge2$ and an attainable fraction $\rho=k/M<1$. Let every pre-repair cell have the same
severity,
\[
g_\theta(c)=g_{\max}>0
\qquad\text{for all }c\in\D.
\]
This constant field is exchangeable: permuting cell labels leaves it unchanged, and no cell is more
likely than another to contain a maximum because every cell is a maximum. Choose any repaired set
$\R$ of size $k$ and suppose, favorably, that repair drives every cell in $\R$ to zero and causes no
spillover. Since $k<M$, at least one cell remains outside $\R$, and every such cell retains severity
$g_{\max}$. Consequently,
\[
\max_{c\notin\R}g_{\theta'}(c)=g_{\max}.
\]
For any proposed universal multiplier $h(\rho)<1$, this construction violates
$\max_c g_{\theta'}(c)\le h(\rho)g_{\max}$ even though the field and the choice among equally severe
cells are fully symmetric. Adding an optimization floor $\varepsilon<{1-h(\rho)}g_{\max}$ does not
change the contradiction. Thus coverage fraction alone cannot yield a nontrivial worst-case
contraction.
\end{proof}

The argument also identifies the error in a quantile-based intuition. Selecting or repairing a
$\rho$ fraction of cells may control the distribution of a uniformly sampled cell, but the maximum
over the remaining $(1-\rho)M$ cells is not the $(1-\rho)$-quantile of the original severity
distribution. It is an extreme order statistic of the uncovered subset and can remain equal to the
global maximum.

\paragraph{Proof of the metric-cover certificate}

\begin{proof}[Proof of Theorem~\ref{thm:metric}]
Fix any $z\in\X_{\mathrm{flip}}$. Because $B$ is finite, there exists
$b_z\in B$ attaining $\min_{b\in B}d(z,b)$. Lipschitz continuity and the audited residual bound give
\begin{align*}
\ell_{\theta'}(z)
&\le \ell_{\theta'}(b_z)
+\left|\ell_{\theta'}(z)-\ell_{\theta'}(b_z)\right|\\
&\le \varepsilon+L d(z,b_z)\\
&\le \varepsilon+Lr(B).
\end{align*}
Since the inequality holds for every $z\in\X_{\mathrm{flip}}$, taking the supremum proves
\[
\sup_{z\in\X_{\mathrm{flip}}}\ell_{\theta'}(z)
\le\varepsilon+Lr(B).
\]
Compactness ensures that the covering radius is finite for the intended bounded domain; the
pointwise argument itself requires only that the displayed radius be finite.
\end{proof}

If the audit only establishes noisy upper estimates
$\widehat\ell_{\theta'}(b)$ with simultaneous error at most $\nu$, then
$\ell_{\theta'}(b)\le\widehat\ell_{\theta'}(b)+\nu$. Replacing $\varepsilon$ by
$\max_b\widehat\ell_{\theta'}(b)+\nu$ in the proof yields
\[
\sup_{z\in\X_{\mathrm{flip}}}\ell_{\theta'}(z)
\le
\max_{b\in B}\widehat\ell_{\theta'}(b)+\nu+Lr(B).
\]
This extension makes explicit that statistical uncertainty and geometric coverage are separate
terms.

\paragraph{Proof of the pooling proposition}

\begin{proof}[Proof of Proposition~\ref{prop:pooling}]
Let the original rewards be $r_1,\ldots,r_k$ and the appended rewards be
$u_1,\ldots,u_m$. By definition,
\[
\sum_{i=1}^k r_i=k\bar r,
\qquad
\sum_{j=1}^m u_j=m\bar u.
\]
The new mean is therefore
\[
\bar r'
=\frac{k\bar r+m\bar u}{k+m}.
\]
Subtracting the original mean gives
\[
\bar r'-\bar r
=\frac{k\bar r+m\bar u-(k+m)\bar r}{k+m}
=\frac{m}{k+m}(\bar u-\bar r).
\]
Thus padding raises the mean exactly when the appended average exceeds the original average.

For the minimum readout, the new set of rewards is the union of the original and appended rewards,
so
\begin{align*}
r'_{\min}
&=\min\bigl(\min_i r_i,\min_j u_j\bigr)\\
&=\min(r_{\min},u_{\min})
\le r_{\min}.
\end{align*}
Hence an append-only edit cannot strictly increase the minimum score.
\end{proof}

\section{Quality-Diversity Search Details}
\label{app:qd}

\paragraph{Archive update}

For each base trace, the implementation first constructs one seed per operator family. An elite
record contains the edited trace, its operator and magnitude descriptor, the base and attacked
aggregate scores, the verified base and attacked answers, and the resulting exploit gain. The
following update is applied for each mutation evaluation.

\begin{table}[h]
\centering
\small
\begin{tabular}{rp{0.82\linewidth}}
\toprule
\textbf{Step} & \textbf{Operation} \\
\midrule
1 & Sample a nonempty archive cell uniformly and retrieve its elite. \\
2 & Mutate the operator identifier or move the magnitude by one neighboring level, with boundary reflection. \\
3 & Instantiate the mutated edit on the same base trace and verify that the edited answer is parseable. \\
4 & Query the PRM once, aggregate its native step probabilities, and compute Definition~\ref{def:gain}. \\
5 & Insert the candidate if its cell is empty; otherwise replace the elite only when the exploit gain is larger. \\
\bottomrule
\end{tabular}
\caption{MAP-Elites update used by the discrete trace-edit search. The model query budget, rather
than the number of successful exploits, is the appropriate unit for comparisons with alternative
search methods.}
\label{tab:algorithm}
\end{table}

When several base traces map to the same global descriptor cell, the global archive retains the
largest observed exploit gain and stores the originating problem identifier. Candidate-level
exploit counts include every evaluated candidate satisfying the criterion and can therefore exceed
the number of cells. Cell coverage counts each descriptor cell once. This distinction explains why
RLHFlow can yield hundreds of strict candidates while the archive coverage remains bounded by one. The MAP-Elites archive update is summarized in Table~\ref{tab:algorithm}.

\paragraph{Descriptor semantics}

The descriptor combines one of five operator families with one of five ordered magnitudes. The
operator families and the semantic role of magnitude are summarized in
Table~\ref{tab:operators}. Exact surface strings are implementation details and should be released
alongside the evaluation script so that paraphrase-level changes are not hidden by a shared family
name.

\begin{table*}[t]
\centering
\small
\begin{tabular}{p{0.19\linewidth}p{0.46\linewidth}p{0.24\linewidth}}
\toprule
\textbf{Operator family} & \textbf{Transformation} & \textbf{Magnitude axis} \\
\midrule
Confident conclusion & Replaces the final answer with a plausible wrong value and appends an unjustified confident conclusion. & Amount of concluding scaffolding. \\
Verification padding & Adds locally fluent checking or restatement steps around the wrong conclusion without introducing valid new evidence. & Number of inserted verification units. \\
Authoritative wrapper & Surrounds the wrong result with formal mathematical language and certainty markers. & Density of authoritative framing. \\
Nearby overwrite & Silently changes the final numeric result to a nearby wrong value while preserving the preceding derivation. & Size of the normalized numeric displacement. \\
Repeated assertion & Restates the wrong answer multiple times as though repetition strengthened its validity. & Number of repeated assertions. \\
\bottomrule
\end{tabular}
\caption{Behavior descriptors in the real-model archive. These cells organize the declared attack
family; they do not by themselves define a metric cover of all semantic trace edits.}
\label{tab:operators}
\end{table*}

\paragraph{Coverage accounting}

A cell enters $V(A)$ after one valid candidate in that cell has been scored, regardless of whether
the candidate is an exploit. A cell enters $C_\xi(A)$ only when its elite exceeds the declared
threshold. Operationally, a cell is proposed for $\R(A)$ only if a fresh adaptive re-audit,
initialized independently of the training elite, fails to exceed the residual level $\varepsilon$
within the declared budget. The formal finite-cell theorem additionally requires that this empirical
condition be promoted to a valid upper bound by an appropriate certification argument; the
metric-cover theorem gives one such route. Reporting only $|C_\xi(A)|/M$ loses two important facts: an empty exploit cell may
have been extensively searched without success, and an occupied exploit cell may contain many
unseen attacks that survive training.

\section{Controlled-Landscape Details}
\label{app:sim}

\begin{figure}[t]
\centering
\begin{tikzpicture}
\begin{axis}[
  width=\linewidth,
  height=4.7cm,
  xlabel={archive coverage $\rho$},
  ylabel={post-repair supremum},
  xmin=0.25,
  xmax=1.02,
  ymin=0,
  ymax=1.08,
  legend style={at={(0.03,0.03)},anchor=south west,font=\scriptsize,draw=none},
  tick label style={font=\scriptsize},
  label style={font=\small},
  every axis plot/.append style={thick}
]
\addplot[mark=*,blue] coordinates {
  (0.29,1.0000)(0.55,0.9542)(0.82,0.9542)(0.96,0.6162)(0.998,0.1490)(1.00,0.0167)
};
\addlegendentry{observed supremum}
\addplot[dashed,red,domain=0.29:1.0] {(1-x)+0.0167};
\addlegendentry{fraction-only heuristic}
\end{axis}
\end{tikzpicture}
\caption{Coverage and worst-cell severity need not be proportional. The dashed curve is the
invalid heuristic $(1-\rho)g_{\max}+\varepsilon$, not a bound. Every non-full-coverage checkpoint
lies above it, including coverages of $0.96$ and $0.998$. The experiment therefore illustrates
Proposition~\ref{prop:impossible} as well as the valid residual decomposition.}
\label{fig:coverage}
\end{figure}

The synthetic field is a $24\times24$ grid. Its latent severities are formed from a small number of
smooth Gaussian bumps plus low-amplitude noise, followed by clipping at zero. MAP-Elites is
initialized along the operator axis and repeatedly selects an elite, changes one grid coordinate by
one cell, and evaluates the latent severity at the resulting location. Budgets are swept over
$\{200,500,1000,2000,4000,8000\}$.

The raw generated field has maximum $1.201$. Before reporting, every severity and the repair floor
are divided by $1.201$, producing the normalized values in Table~\ref{tab:sim}. At each checkpoint,
oracle repair replaces the severity of archived cells by $\varepsilon=0.0167$. This makes
$\zeta_{\R}=0$ and leaves the unrepaired cells at their pre-repair values. Consequently,
Theorem~\ref{thm:finite} specializes to
\[
\max_c g_{\theta'}(c)
=\max\!\left\{0.0167,\max_{c\notin\R}g_\theta(c)\right\}.
\]
The equality is expected because the experiment constructs the two terms directly. Its useful role
is to confirm the archive bookkeeping and to display the nonlinearity of the uncovered maximum.
All five non-full-coverage checkpoints in Table~\ref{tab:sim} violate the fraction-only heuristic
$(1-\rho)g_{\max}+0.0167$, while the finite-cell certificate holds at every checkpoint.

The released diagnostic files are \texttt{verify\_summary.json} and
\texttt{verify\_coverage.csv}. A complete release should additionally store the archive cell
identifiers at every budget, allowing $U_{\R}$ and any metric covering radius to be recomputed rather
than inferred from plotted values.

\section{Real-PRM Discovery Details}

\begin{table}[t]
\centering
\small
\resizebox{\columnwidth}{!}{%
\begin{tabular}{lrrrrr}
\toprule
\textbf{Setting} & \textbf{traces} & \textbf{evals} & \textbf{elites} & \textbf{exploits} & \textbf{$\rho_{\mathrm{exp}}$} \\
\midrule
Deceptive, mean & 67 & 3015 & 1213 & 16 & 0.16 \\
Deceptive, min  & 67 & 3015 & 1241 & 0  & 0.00 \\
\bottomrule
\end{tabular}}
\caption{Initial discovery audit on \texttt{Qwen2.5-Math-PRM-7B}. The evaluation budget is five
operator seeds plus $40$ mutation evaluations per attacked trace ($45\times67=3015$); ``elites''
counts the distinct occupied descriptor cells summed over traces, which is smaller because the
search revisits cells. Coverage is the fraction of the $25$
descriptor cells containing at least one strict exploit; it is not the fraction of natural-language
attack space covered.}
\label{tab:realprm}
\end{table}

\begin{table}[t]
\centering
\small
\begin{tabular}{lrrr}
\toprule
\textbf{Audited set $B$} & \textbf{$r(B)$} & \textbf{$\widehat L$} & \textbf{$\varepsilon_{\mathrm{opt}}+\widehat L\,r(B)$} \\
\midrule
Top-5 cells & 1.00 & 0.46 & 0.48 \\
Top-10 cells & 1.00 & 0.46 & 0.48 \\
Top-15 cells & 1.00 & 0.46 & 0.48 \\
Top-20 cells & 1.00 & 0.46 & 0.48 \\
All 25 (every op family) & 0.00 & 0.46 & 0.02 \\
\bottomrule
\end{tabular}
\caption{Metric-cover \emph{plug-in diagnostic} on the real Qwen archive
(not a certified bound; \S\ref{sec:realcert}). Five substitutions prevent
a certified reading: (i) search assigns non-exploit cells severity zero,
but a failed search is only a \emph{lower} bound on cell severity; (ii)
$\widehat L=0.46$ is the largest \emph{observed} loss ratio, a lower bound
on the true Lipschitz constant; (iii) the metric
$d=\ind\{\text{operator differs}\}+|\Delta\text{mag}|/4$ covers descriptor
cells, whereas the theorem requires coverage over complete problem--attack
instances, so $r(B)=0$ over the $25$ cells does not cover
natural-language attack space; (iv) $\rho_{\mathrm{exp}}$ is substituted
for $\rho_{\mathrm{rep}}$; (v) post-training spillover
$\zeta_{\mathcal R}$ is not bounded. The plug-in residual
$\varepsilon_{\mathrm{opt}}+\widehat L\,r(B)$ is additionally conditional
on the repair achieving $\varepsilon_{\mathrm{opt}}=0.02$ on $B$. Within
those assumptions it shrinks only when the covering radius does, i.e.\
when every operator family is audited, not merely when many cells are;
the finite-cell plug-in equals $\varepsilon_{\mathrm{opt}}$ because no
\emph{searched} uncovered cell carries an exploit.}
\label{tab:realcert}
\end{table}
\label{app:realprm}

\paragraph{Base traces and correctness}

The current runs use the first correct GSM8K training solutions with at least two identifiable
reasoning steps. Before an attack is scored, the base final answer is parsed and matched against the
dataset answer. The edited final answer is parsed through the same path and must be wrong. An
unparseable edit is a failed candidate, not an exploit. Because these examples come from a training
split and may have appeared in model pretraining, the experiment measures attack discovery on a
controlled trace source rather than out-of-distribution generalization.

Final-answer verification is sufficient for the correctness-flip event in Definition~\ref{def:gain}
but does not establish that every inserted sentence is locally coherent. A stronger release should
include an independent audit of fluency, semantic plausibility, and whether the first newly incorrect
step is located where the operator intended. Such validation can be performed on a stratified sample
without changing the deterministic final-answer label.

\paragraph{Qwen reader}

For \texttt{Qwen2.5-Math-PRM-7B}, the trace is formatted with the model's native chat template and
step separator. At each separator position, the implementation extracts the two-class reward-head
logits, applies a softmax, and retains the positive-class probability. Mean and minimum readouts are
computed over exactly the same extracted step sequence. The base and attacked traces must contain at
least one valid separator score. The reader should be validated against a small set of reference
examples from the model implementation before attack statistics are produced.

\paragraph{RLHFlow reader and normalization}

The RLHFlow Llama-3.1 PRM \citep{xiong2024rlhflow} loads as a causal LM
(\texttt{causal\_step} path): for each step $i$ the reader scores the
prefix through step $i$ followed by the literal query ``Is this step
correct? '' and takes the next-token probability
$P(\texttt{+})/(P(\texttt{+})+P(\texttt{-}))$, so every RLHFlow step
score is already a probability in $[0,1]$---the same scale as the Qwen
two-class softmax readout; no further normalization is applied to either
model, and mean/min readouts aggregate these per-step probabilities
directly.

\paragraph{Thresholds and dependence}

The existing tables use $\xi=0$, so any strictly positive aggregate difference is counted. This
choice is transparent but is not sufficient for a deployment claim. Recommended reporting includes
a threshold curve over absolute score increases, at least
$\xi\in\{0,10^{-3},5\!\times\!10^{-3},10^{-2},5\!\times\!10^{-2}\}$, together with the maximum and
median positive gain. Thresholds should be applied before transfer is computed.

Mutations descending from one base trace are correlated. Confidence intervals should therefore
resample base problems, not individual candidates. A problem-level bootstrap can recompute attack
success, exploit-cell coverage, maximum gain, and transfer for each replicate. With only $120$
problems, maximum-gain intervals may remain wide; reporting a high quantile such as the
$95$th-percentile gain alongside the maximum reduces sensitivity to a single candidate.

\paragraph{Syntactic control}

The five negative-control operators split into two groups: three
\emph{pure-corruption} edits (add a constant to an intermediate number,
drop a reasoning step, swap arithmetic operands) and two \emph{padding}
edits (contentless filler, goal restatement), none adding deceptive
justification. Every control operator, including the padding pair,
\emph{additionally corrupts the final numeric value} (a fixed $+2$ shift
of the last number), because Definition~1 requires a correctness flip:
``contentless'' refers to the inserted text, and the score \emph{rise}
comes from the padding while the correctness flip comes from the
mechanical final-number shift---the pure-corruption operators apply the
same shift without padding and never gain score. In the recorded run
(\texttt{results\_syntactic/}\allowbreak
\texttt{prm\_\allowbreak syntactic\_\allowbreak
control\_\allowbreak summary.json}), the
mean readout yields $47$ strict exploits, \emph{all} from the two padding
operators; the three pure-corruption operators produce none---consistent
with the main-text finding (\S\ref{sec:realprm}) that any locally
innocuous padding, not deception, drives the mean-pooling failure.  This
does not establish that corruption attacks are universally ineffective:
the control set is small and no gradient- or model-guided corruption is
included.

\section{Excluded Reader Failure}
\label{app:readerfailure}

The attempted Skywork-o1-Open-PRM run produced an unparsed native reward channel under the uniform
reader. All extracted per-cell differences were therefore exactly zero. The raw diagnostic row was

\begin{center}
\small
\resizebox{\columnwidth}{!}{%
\begin{tabular}{lrrrr}
\toprule
\textbf{PRM} & \textbf{mean count} & \textbf{min count} & \textbf{coverage} & \textbf{reported gain} \\
\midrule
Skywork-o1-PRM & 0 & 0 & 0.00 & 0.000 \\
\bottomrule
\end{tabular}}
\end{center}

This row is preserved only as a failed-pipeline record. It is excluded from Tables
\ref{tab:multiprm} and \ref{tab:transfer} because a constant reader cannot distinguish robustness
from extraction failure. The corresponding Qwen-to-Skywork and RLHFlow-to-Skywork transfer values
of zero are likewise non-results. A corrected study must implement the model-specific tokenization
and reward extraction path, verify nonconstant outputs on documented examples, and rerun every base
and attacked trace from the beginning.

\section{Archive-Guided Repair Protocol}
\label{app:repairprotocol}

The following protocol turns the theoretical repair objective into a falsifiable experiment without
reusing attack examples for evaluation.  Execution status, stated exactly:
the repair stage, fresh adaptive re-attack, clean-split problem
generalization, MATH-500 replication (\S\ref{sec:realrepair}), the
\emph{paired} pre/post re-attack with the pairwise-margin loss plus
explicit clean anchor (three repair seeds, two attack seeds,
reconciled union statistics, hierarchical bootstrap), the equal-budget
\emph{repair-source} controls (random, exhaustive, strongest-only,
clean-only), the unseen-wording variant-B template arm, and the full
clean-utility panel (\S\ref{sec:pairedrepair}) are \emph{executed}.
The held-out operator family (v1) returned \emph{no repair pairs}
(every repair-half exploit belonged to the held-out family) and
contributes no generalization evidence; an open-ended attack
generator, readout-side defenses, downstream solver evaluation, and
ProcessBench/PRMBench discrimination remain \emph{unexecuted}, are
written in the conditional below, and contribute no number to any
table.

\begin{table}[t]
\centering
\small
\setlength{\tabcolsep}{3.4pt}
\resizebox{\columnwidth}{!}{%
\begin{tabular}{lcccc}
\toprule
\textbf{State} & \textbf{AUROC} & \textbf{Brier} & \textbf{Best-of-4} & \textbf{Margin} \\
\midrule
Pre & .884 & .342 & 1.00 & .192 \\
QD s0/s1/s2 & .935/.915/.921 & .385/.365/.372 & 1.00 & .198--.201 \\
Random s0/s1 & .938/.937 & .392/.390 & 1.00 & .196 \\
Exhaustive s0/s1 & .938/.939 & .392/.391 & 1.00 & .196--.197 \\
Strongest s0/s1 & .941/.949 & .393/.402 & 1.00 & .194--.196 \\
Clean-only & .884 & .342 & 1.00 & .192 \\
\midrule
\multicolumn{5}{l}{\emph{Equal-budget search baselines (original model, eval half,}}\\
\multicolumn{5}{l}{\emph{budget $45$/problem): mean best gain / exploit-cell coverage:}}\\
\multicolumn{5}{l}{\quad MAP-Elites $.021$/$.40$; random $.022$/$.40$;}\\
\multicolumn{5}{l}{\quad single-obj.\ $.009$/$.18$; exhaustive $.024$/$.48$}\\
\bottomrule
\end{tabular}
}
\caption{Top: full clean-utility panel on the evaluation half (ranking
AUROC of correct vs.\ corrupted, Brier of mean-step score against
correctness, mean ranking margin), per repair condition and seed.
Best-of-$4$ is defined exactly: per problem, the clean correct trace
versus three corruption variants (\texttt{confident\_wrong},
\texttt{plausible\_number}, \texttt{authoritative\_frame} at random
magnitude); success iff the clean trace scores strictly highest of the
four---a clean-vs-corrupted ranking ceiling, not robustness in an
attacked candidate pool.  Every adversarial repair improves AUROC and
worsens Brier (score inflation); clean-only changes nothing.  Bottom:
same-statistic equal-budget \emph{search} baselines; the budget exceeds
the $25$-cell grid, so exhaustive enumeration is the strongest
per-problem searcher at this scale.}
\label{tab:cleanutil}
\end{table}

\begin{table}[t]
\centering
\small
\setlength{\tabcolsep}{2.6pt}
\resizebox{\columnwidth}{!}{%
\begin{tabular}{lcccccc}
\toprule
\textbf{Condition} & \textbf{Rate ($k/54$)} & \textbf{Mean gain} & \textbf{Max} & \textbf{q95} & \textbf{$\Delta$gain [CI]} & \textbf{$\Delta$rate} \\
\midrule
Pre (original) & .148 (8) & .0212 & .333 & .146 & --- & --- \\
\midrule
QD, s0 & .037 (2) & .0060 & .194 & .000 & $-.015\,[-.030,-.004]$ & $-6/54$ \\
QD, s1 & .056 (3) & .0068 & .177 & .011 & $-.014\,[-.028,-.004]$ & $-5/54$ \\
QD, s2 & .074 (4) & .0092 & .212 & .034 & $-.012\,[-.023,-.003]$ & $-4/54$ \\
\midrule
Random, s0 & .037 (2) & .0043 & .168 & .000 & $-.017\,[-.033,-.004]$ & $-6/54$ \\
Random, s1 & .037 (2) & .0048 & .172 & .000 & $-.016\,[-.032,-.004]$ & $-6/54$ \\
Exhaustive, s0 & .037 (2) & .0042 & .166 & .000 & $-.017\,[-.034,-.004]$ & $-6/54$ \\
Exhaustive, s1 & .037 (2) & .0048 & .173 & .000 & $-.016\,[-.032,-.004]$ & $-6/54$ \\
Strongest, s0 & .037 (2) & .0053 & .191 & .000 & $-.016\,[-.031,-.004]$ & $-6/54$ \\
Strongest, s1 & .037 (2) & .0048 & .187 & .000 & $-.016\,[-.032,-.004]$ & $-6/54$ \\
Clean-only, s0/s1 & .148 (8) & .0212 & .333 & .146 & $0\,[0,0]$ & $0$ \\
\bottomrule
\end{tabular}
}
\caption{Paired repair v2 on \texttt{Qwen2.5-Math-PRM-7B} (GSM8K,
evaluation half of $54$ problems; identical budgets and denominators
pre/post).  Every statistic, including the rate column, is computed on
the \emph{same} per-problem best-nonnegative-gain vector averaged over
the two attack seeds, so rate columns subtract exactly to $\Delta$rate.
$\Delta$ columns are per-problem paired bootstrap $95\%$ intervals; all
adversarially trained conditions exclude zero.  Repair-source
conditions use an equal pair budget (cap $20$) and equal steps; QD used
$20$ pairs, random/exhaustive/strongest found $13$.  Threshold curves
per condition are in the released JSON
(\texttt{results\_paired\_v2/}\allowbreak
\texttt{prm\_paired\_v2\_summary.json}); the pre-repair curve falls
from $0.148$ at $\tau{=}0$ to $0.056$ at $\tau{=}0.1$.}
\label{tab:paired}
\end{table}

\begin{table}[t]
\centering
\small
\begin{tabular}{llr}
\toprule
\textbf{Source PRM} & \textbf{Target PRM} & \textbf{strict transfer} \\
\midrule
Qwen & RLHFlow & 1.00 \\
RLHFlow & Qwen & 0.93 \\
\bottomrule
\end{tabular}
\caption{Cross-PRM transfer for mean-readout exploits. Transfer records preservation of a strict
positive score change, not preservation of the source effect size. Thresholded transfer curves and
problem-level confidence intervals are required before deployment-level conclusions.}
\label{tab:transfer}
\end{table}

\begin{table}[t]
\centering
\small
\resizebox{\columnwidth}{!}{%
\begin{tabular}{lrrrrr}
\toprule
\textbf{PRM} & \textbf{evals/readout} & \textbf{mean} & \textbf{min} & \textbf{mean cov.} & \textbf{$g_{\max}$} \\
\midrule
\makecell[l]{Qwen2.5-Math-\\PRM-7B} & 5400 & 44  & 1   & 0.40 & 0.294 \\
\makecell[l]{RLHFlow-Llama3.1-\\PRM} & 5400 & 645 & 675 & 1.00 & 0.005 \\
\bottomrule
\end{tabular}%
}
\caption{Larger discovery audit over $120$ sampled base traces. The
evals/readout column is the \emph{allocated} budget
($120\times45=5{,}400$); only the $107$ traces whose base answer passes
exact-match verification are attacked, consuming $107\times45=4{,}815$
evaluations, and all counts, coverage, and problem-level statistics use
$n=107$ as denominator. ``mean'' and ``min'' are candidate-level
strict exploit counts. The coverage and maximum-gain columns summarize the mean-readout archive,
matching the archive used for the transfer audit. Counts should be interpreted together with
$g_{\max}$: a strict positive event can be numerically small.}
\label{tab:multiprm}
\end{table}

\paragraph{Problem and attack splits}

Problems should be divided into disjoint archive-training, calibration, and final-test partitions.
Operator surface forms should be split independently so that held-out paraphrases instantiate known
families, while at least one entire operator family is reserved for an unseen-mechanism test. The QD
search runs only on the archive-training problems when constructing $A_+$. Hyperparameters,
including the margin $m$, repair weight $\lambda$, and operational threshold $\xi$, are selected on
the calibration problems.

\paragraph{Repair and clean anchoring}

The PRM is fine-tuned with the pairwise margin objective from Section~\ref{sec:certificate}. Each
batch should mix clean process examples, ordinary incorrect traces, and archived exploits. A
frozen-copy regularizer can penalize movement of clean step probabilities, while the original clean
objective preserves error-detection ability. Merely lowering every attacked score is insufficient:
the repair must retain or improve the separation of correct and naturally incorrect traces.

\paragraph{Fresh adaptive re-audit}

After training, MAP-Elites is restarted from independent seeds on final-test problems. The re-audit
includes known operators with held-out wording, the reserved operator family, and an open-ended
LLM mutation channel. The attack budget and PRM-query count are fixed before comparison. Cells are
added to $\R$ only from this post-training re-audit; training-set attack fit is reported separately.
To measure harmful spillover, the same search is run in cells outside $\R$ before and after repair,
producing an empirical estimate of $\zeta_{\R}$.

\paragraph{Baselines and outcomes}

Archive-guided repair should be compared under equal training-example and PRM-query budgets with
random attack augmentation, training on globally strongest attacks only, exhaustive evaluation of
the discrete $25$-cell grid, and a no-repair control. Readout baselines should include mean,
minimum, product, last-step, a lower quantile, and a differentiable soft minimum. Discovery is
reported through attack success, exploit-cell coverage, QD score, maximum and high-quantile gain,
and semantic uniqueness. Repair is reported on held-out and adaptive attacks. Clean utility is
reported through step-error detection, correct-versus-incorrect ranking, calibration, and
best-of-$n$ answer selection. All intervals should resample problems, and all model-reader failures
should be excluded before aggregation.

This design separates four outcomes that a single exploit count cannot distinguish: memorization of
archived strings, generalization within a descriptor cell, transfer to an unseen attack family, and
preservation of clean PRM utility. Only the latter three support a substantive defense claim.

\section{Supplementary Figures}
\label{sec:supp_figures}

We next evaluate the discovered vulnerabilities and the effectiveness of the
repair procedure from several complementary perspectives. Beyond measuring
whether exploits can be found, we analyze their prevalence, severity,
threshold sensitivity, generalization across reasoning benchmarks, and impact
on clean-task behavior. Since exploit frequency and exploit magnitude capture
different failure modes, we report both problem-level prevalence and reward
gain statistics throughout the evaluation.

The repair procedure reduces both attack prevalence and attack severity.
Figure~\ref{fig:repair_effect_all_conditions}(a) demonstrates a substantial
reduction in exploit frequency across repair conditions, while
Figure~\ref{fig:repair_effect_all_conditions}(b) shows a corresponding
suppression of high-gain attack tails. Residual maximum gains after fresh
adaptive re-attack indicate that repair reduces, but does not completely
eliminate, worst-case vulnerability.

Aggregation choice and PRM architecture also affect the observed exploit
landscape. Figure~\ref{fig:aggregation_sensitivity}(a) shows that mean and
minimum aggregation produce substantially different strict exploit counts for
Qwen2.5-Math-PRM-7B and RLHFlow-Llama3.1-PRM. However, exploit count alone does
not determine practical risk: Figure~\ref{fig:aggregation_sensitivity}(b)
shows that the two PRMs exhibit markedly different maximum reward gains,
highlighting that vulnerability depends on both aggregation strategy and the
underlying reward model.

To distinguish strict detections from practically meaningful attacks, we
perform a threshold sensitivity analysis. As shown in
Figure~\ref{fig:threshold_curve}, the exploit rate decreases monotonically
from $0.168$ at $\tau=0$ to $0.028$ at $\tau=0.2$, demonstrating that
zero-threshold exploit counts include a substantial number of low-margin
events.

We further examine whether the discovered phenomena generalize beyond GSM8K.
Figure~\ref{fig:math500_generalization}(a) shows that mean readout identifies
41 strict exploits on MATH-500, while minimum pooling reduces this count to 19
and post-repair evaluation eliminates all detected exploits. Figure
~\ref{fig:math500_generalization}(b) shows that minimum pooling primarily
reduces severity, decreasing the worst-case cell gap from $0.232$ to $0.020$,
whereas repair removes the observed nonnegative-gain exploits.

Finally, we evaluate whether repair preserves clean-task utility.
Figure~\ref{fig:clean_utility}(a) shows improved ranking AUROC across repair
conditions compared with the original model, while
Figure~\ref{fig:clean_utility}(b) reveals the corresponding increase in Brier
score, consistent with score inflation. Despite this calibration shift,
Figure~\ref{fig:clean_utility}(c) confirms that Best-of-4 clean ranking remains
perfect at $1.00$ across all conditions, indicating no degradation in the
evaluated clean ranking behavior.

\begin{figure*}[t]
    \centering
    \includegraphics[width=\textwidth]{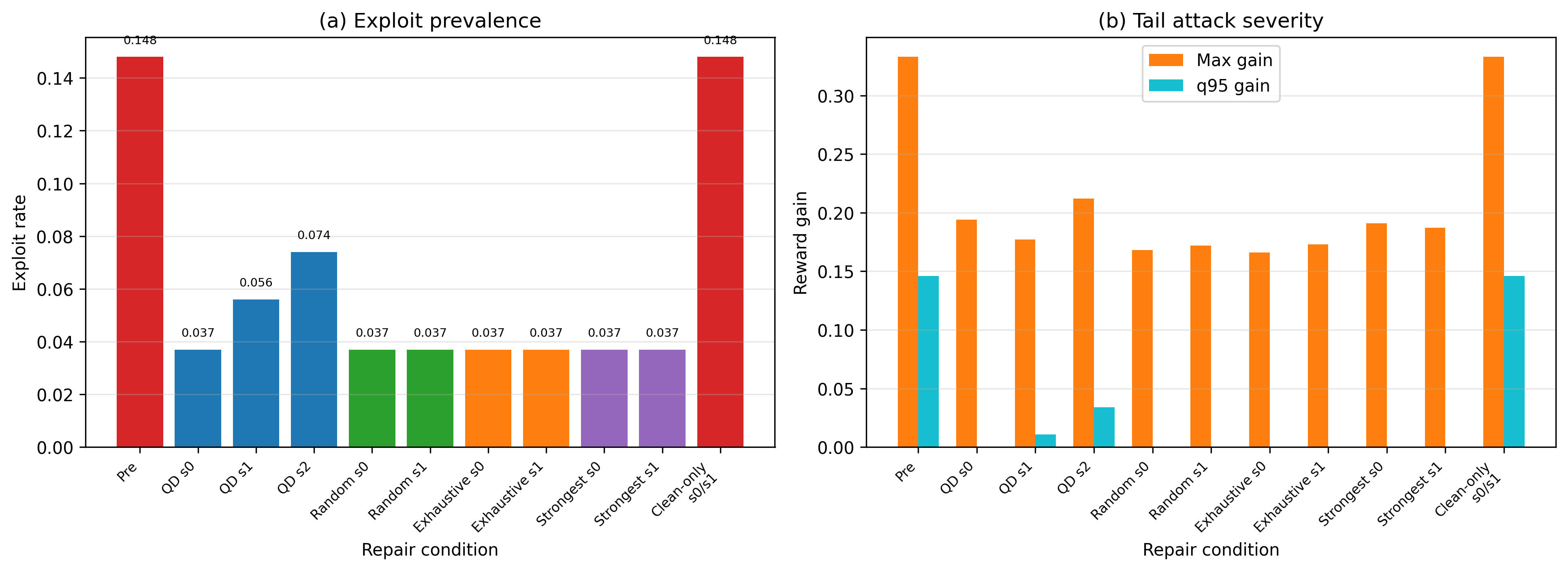}
    \caption{
   Paired repair analysis across repair conditions on the GSM8K evaluation set. (a) Exploit prevalence, measured as the fraction of problems exhibiting positive-gain exploits. (b) Attack severity, measured by the maximum observed reward gain and the 95th-percentile reward gain. Repair substantially reduces exploit prevalence and tail severity, but residual maximum gains remain after fresh adaptive re-attack.
    }
    \label{fig:repair_effect_all_conditions}
\end{figure*}

\begin{figure*}[t]
    \centering
    \includegraphics[width=\textwidth]{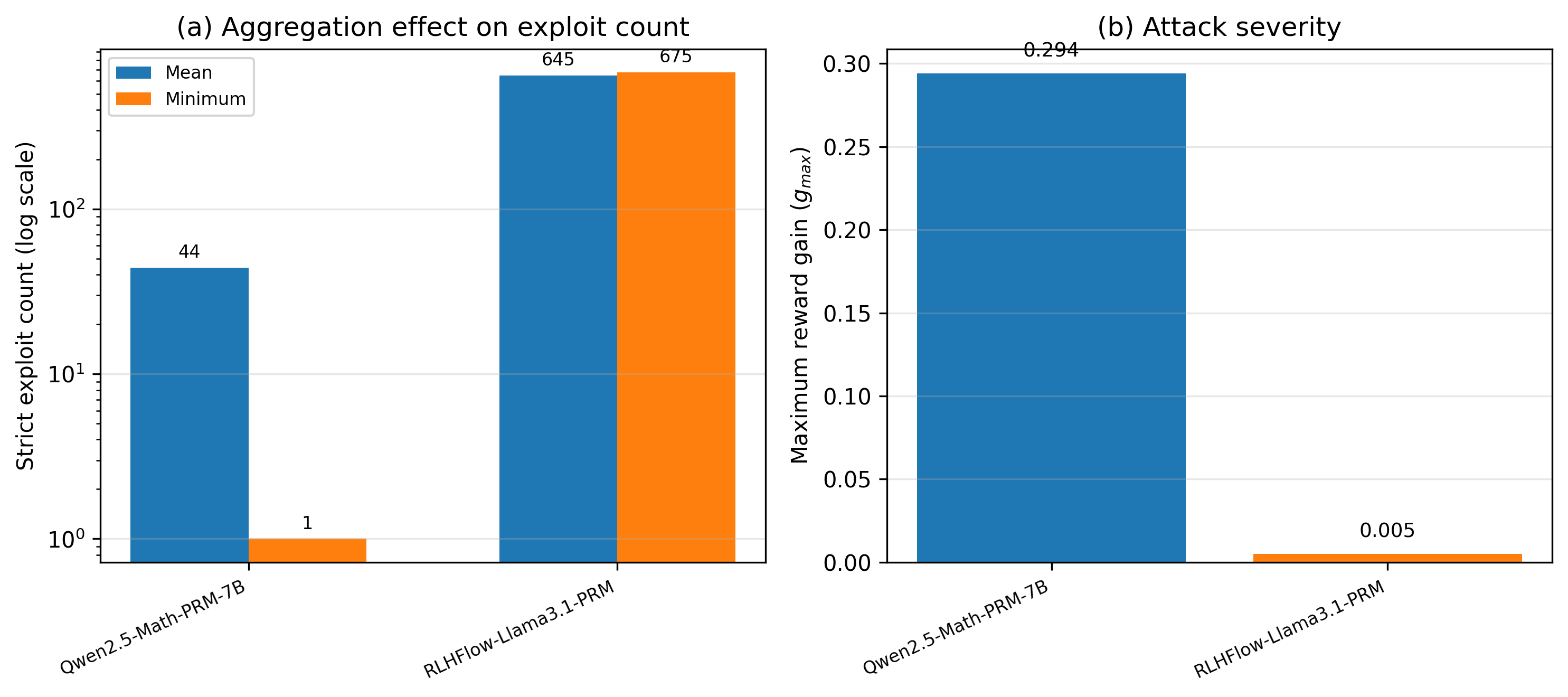}
\caption{Aggregation and PRM dependence of exploit discovery. 
(a) Candidate-level strict exploit counts under mean and minimum readouts for 
Qwen2.5-Math-PRM-7B and RLHFlow-Llama3.1-PRM. 
(b) Maximum gain ($g_{\max}$) from the mean-readout archive, showing 
substantially different attack severity across the two PRMs. 
The results show that exploit counts and severity vary with both the 
aggregation rule and the PRM.}
    \label{fig:aggregation_sensitivity}
\end{figure*}

\begin{figure*}[t]
    \centering
    \includegraphics[width=0.65\linewidth]{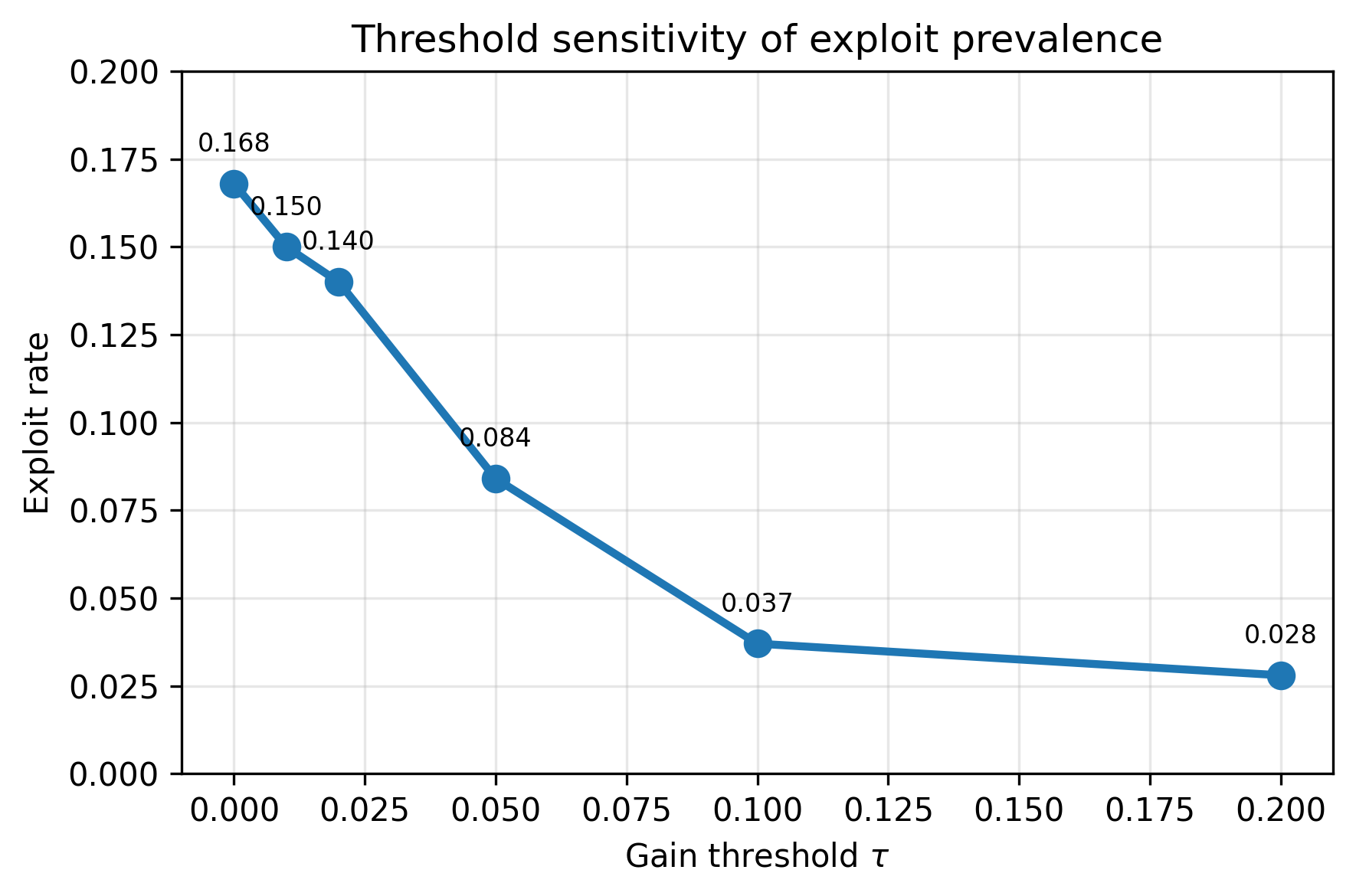}
  \caption{Exploit rate as a function of the gain threshold ($\tau$) on GSM8K. Increasing $\tau$ reduces the exploit rate, showing that strict zero-threshold counts include many low-margin events.}
    \label{fig:threshold_curve}
\end{figure*}

\begin{figure*}[t]
    \centering
    \includegraphics[width=\linewidth]{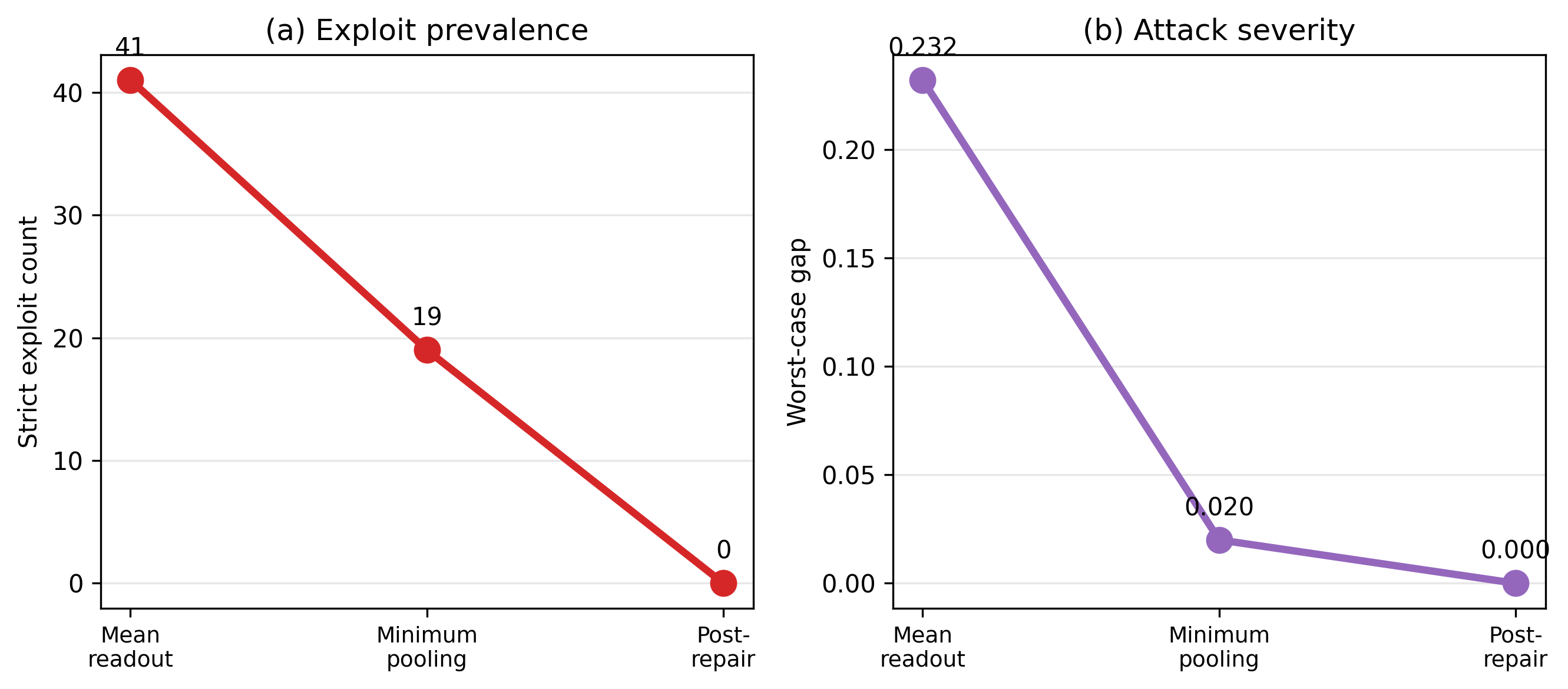}
    \caption{Generalization of exploit discovery and repair on MATH-500. (a) Strict exploit counts decrease from mean to minimum pooling and fall to zero after repair. (b) Maximum cell-gap severity decreases substantially under minimum pooling and reaches zero under the nonnegative-gain convention after repair. The results indicate that minimum pooling primarily suppresses exploit severity, whereas repair eliminates the observed nonnegative-gain exploits.}
    \label{fig:math500_generalization}
\end{figure*}

\begin{figure*}[t]
    \centering
    \includegraphics[width=\linewidth]{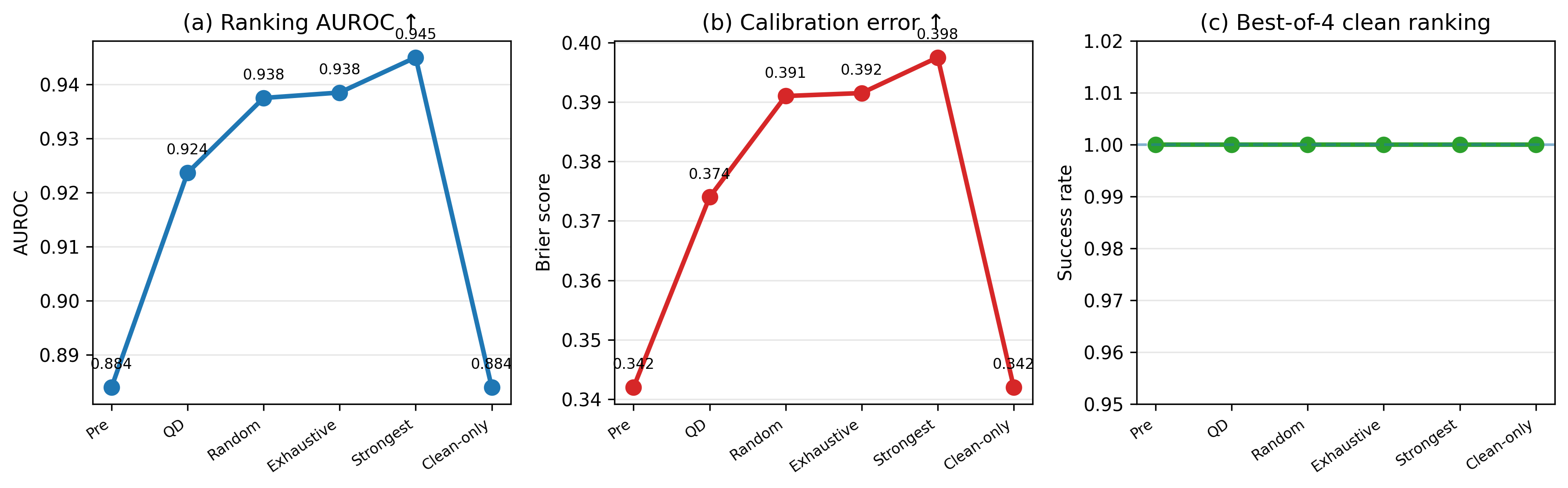}
    \caption{Clean-task ranking and scoring evaluation after repair. (a) Ranking AUROC improves across adversarial repair conditions relative to the original model, indicating improved discrimination between correct and corrupted traces. (b) Brier scores increase after repair, reflecting score inflation despite improved ranking quality. (c) Best-of-4 clean-vs-corruption ranking remains perfect across all conditions, showing no degradation on this evaluation.}
    \label{fig:clean_utility}
\end{figure*}
\end{document}